\documentclass[]{databricksai}
\usepackage{bm}

\newcommand{\figref}[1]{Figure~\ref{#1}}

\usepackage{wrapfig}
\usepackage{makecell}
\usepackage{enumitem}
\usepackage{tikz}
\usepackage{listings}
\usepackage{algorithmicx}
\usepackage[noend]{algpseudocode}

\definecolor{accentblue}{HTML}{376DCC}
\definecolor{accentorange}{HTML}{EE5A24}

\newcommand{\secvsabove}{\vspace{-1.5mm}}

\lstdefinelanguage{json}{
  basicstyle=\ttfamily\footnotesize,
  numbers=left,
  stepnumber=1,
  numberstyle=\tiny,
  breaklines=true,
}

\lstdefinestyle{mdsrc}{
  basicstyle=\ttfamily\scriptsize,
  breaklines=true,
  numbers=left,
  numberstyle=\tiny
}

\tcbset{
  templatebox/.style={
    colback=dbxbg,
    colframe=black,
    enhanced,
    boxrule=0.6pt,
    arc=2pt,
    left=6pt,
    right=6pt,
    top=6pt,
    bottom=6pt
  }
}
\newtcolorbox{TemplateBox}[1][]{templatebox,#1}

\algrenewcommand\algorithmicrequire{\textbf{Input:}}
\algrenewcommand\algorithmicensure{\textbf{Output:}}

\usepackage[textsize=tiny]{todonotes}
\usepackage{draftfigure}
\usepackage{tabularx}
\usepackage{float}
\usepackage{amsthm}

\theoremstyle{plain}
\newtheorem{theorem}{Theorem}[section]
\newtheorem{proposition}[theorem]{Proposition}
\newtheorem{lemma}[theorem]{Lemma}

\theoremstyle{definition}

\theoremstyle{remark}

\title{Spend Your Rollouts Where It Counts: Rollout Allocation for Group-Based RL Post-Training}

\author[1,2]{Woojeong Kim}
\author[1]{Ziyi Yang}
\author[2]{Jing Nathan Yan}
\author[1]{Jialu Liu}

\affiliation[1]{Databricks}
\affiliation[2]{Cornell University}
\abstract{
Reinforcement learning (RL) is the dominant paradigm for post-training large language models. However, in the online, on-policy setting, rollout generation dominates the computational cost of training. Group-based policy optimization methods compute advantages from multiple rollouts per prompt, yet they indiscriminately allocate budget to prompts with collapsed reward distributions, wasting expensive rollouts on negligible learning signals. We demonstrate that group-based updates are most effective in regimes of high reward variance. Since the policy evolves throughout training, prompt informativeness must be estimated online rather than precomputed, but exhaustively evaluating every prompt is computationally prohibitive. We introduce \emph{Pilot-Commit}, a budget-aware rollout allocation framework for group-based RL post-training. Pilot-Commit decouples prompt evaluation from exploitation: a pilot stage estimates per-prompt informativeness using a fraction of the budget, and the remaining rollouts are allocated to high-leverage prompts while low-signal prompts are skipped. Across multiple math reasoning benchmarks and model scales from 1.5B to 14B parameters, Pilot-Commit matches baseline accuracy with significantly lower sampling costs, reaching target accuracy up to $1.9\times$ faster than GRPO and $4.0\times$ faster than DAPO in cumulative rollouts.
}

\date{\today}
\correspondence{Woojeong Kim \email{wk247@cornell.edu}, 
Jialu Liu \email{jialu.liu@databricks.com}}
\metadata[Code]{\url{https://github.com/databricks/pilot-commit}}

\begin{document}
\maketitle

\secvsabove

\begin{figure}[!ht]
    \centering
    \begin{subfigure}[t]{0.48\textwidth}
        \centering
        \includegraphics[width=\linewidth]{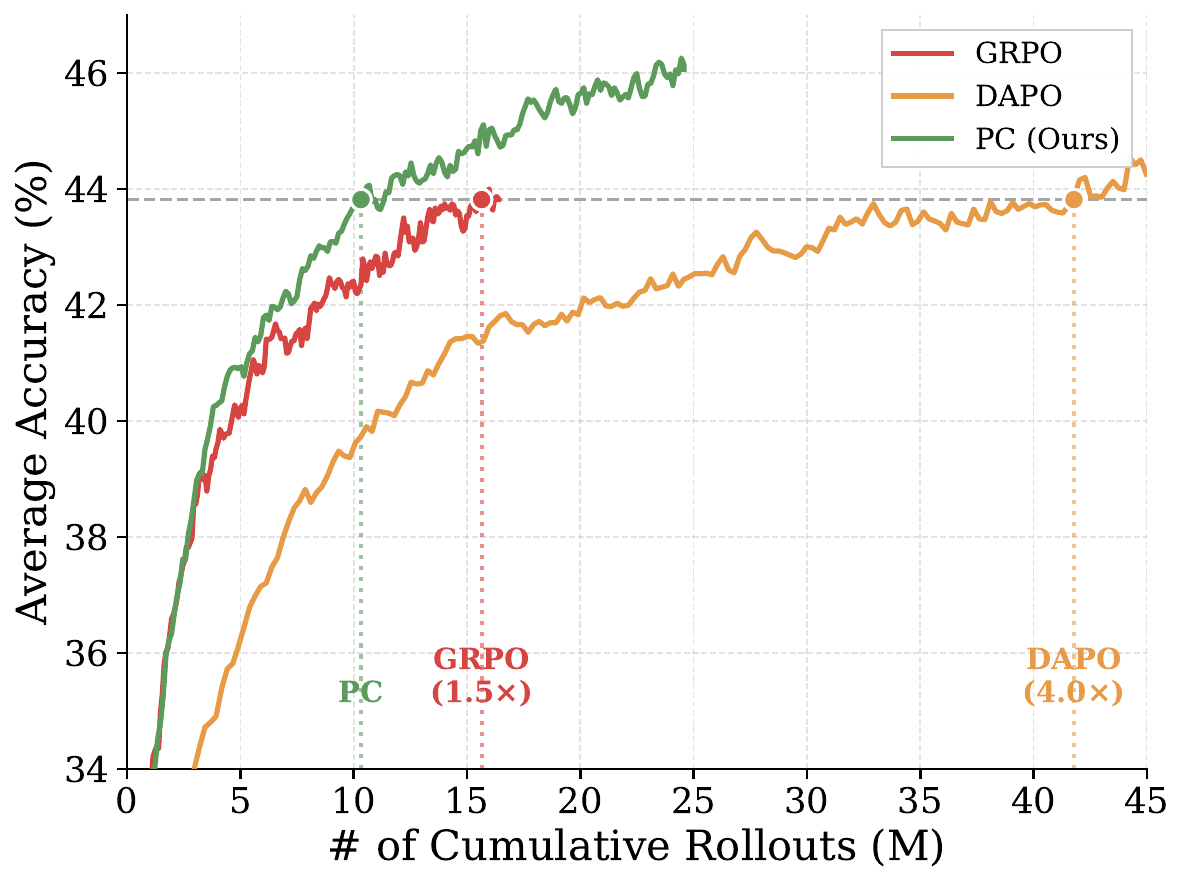}
        \caption{Qwen2.5-Math-1.5B, DeepMath-103K ($n{=}128$).}
        \label{fig:main-1.5b}
    \end{subfigure}
    \hfill
    \begin{subfigure}[t]{0.48\textwidth}
        \centering
        \includegraphics[width=\linewidth]{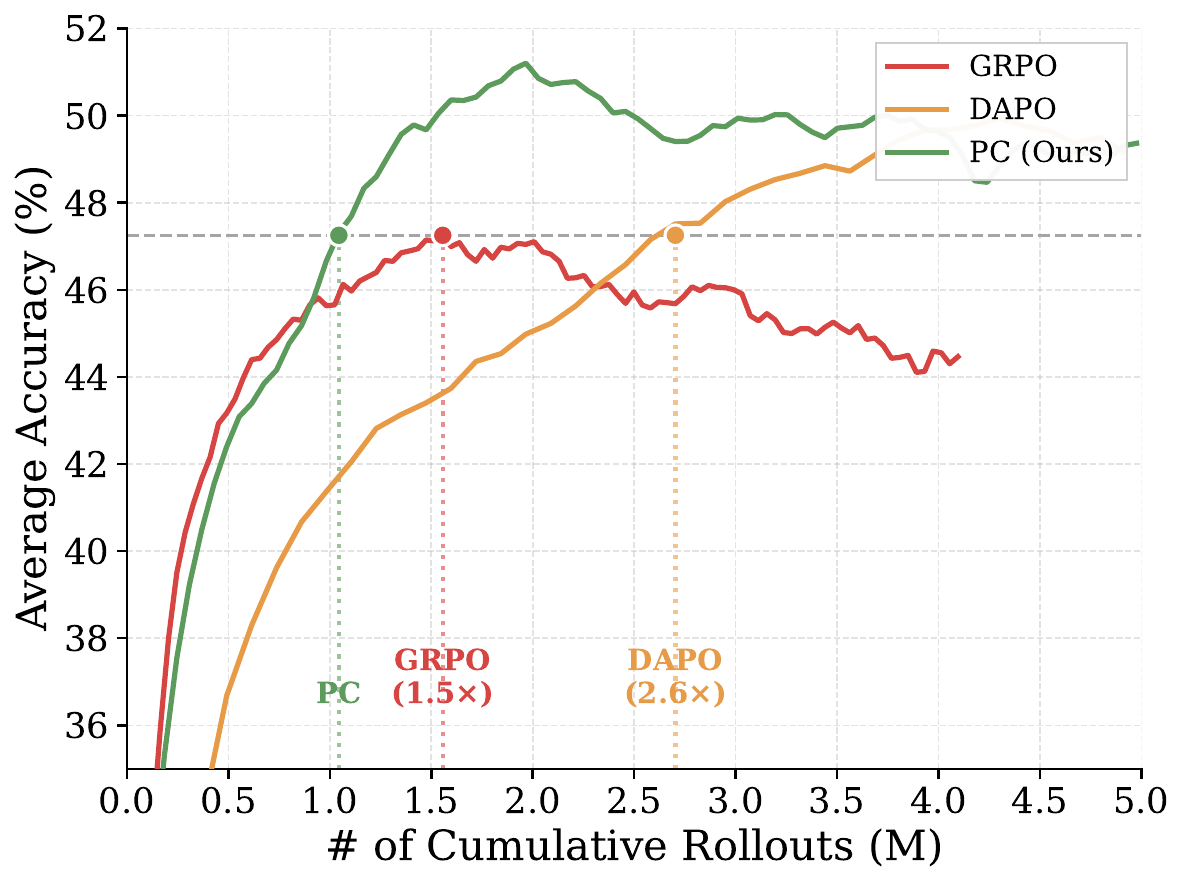}
        \caption{Qwen3-8B, Polaris-53K ($n{=}64$).}
        \label{fig:main-8b}
    \end{subfigure}
    \caption{
    Average accuracy across math benchmarks versus cumulative rollouts. Our proposed Pilot-Commit (PC) reaches baseline accuracy using $1.5\times$/$4.0\times$ fewer rollouts at 1.5B (a) and $1.5\times$/$2.6\times$ fewer at 8B (b) compared to GRPO/DAPO. Sampling cost per training step varies by method (DAPO $>$ PC $>$ GRPO). The two settings differ in total budget due to different group sizes and training durations.
    }
    \label{fig:main}
\end{figure}

\section{Introduction}
\label{sec:intro}

\begin{figure*}[!t]
    \centering
    \includegraphics[width=0.8\textwidth]{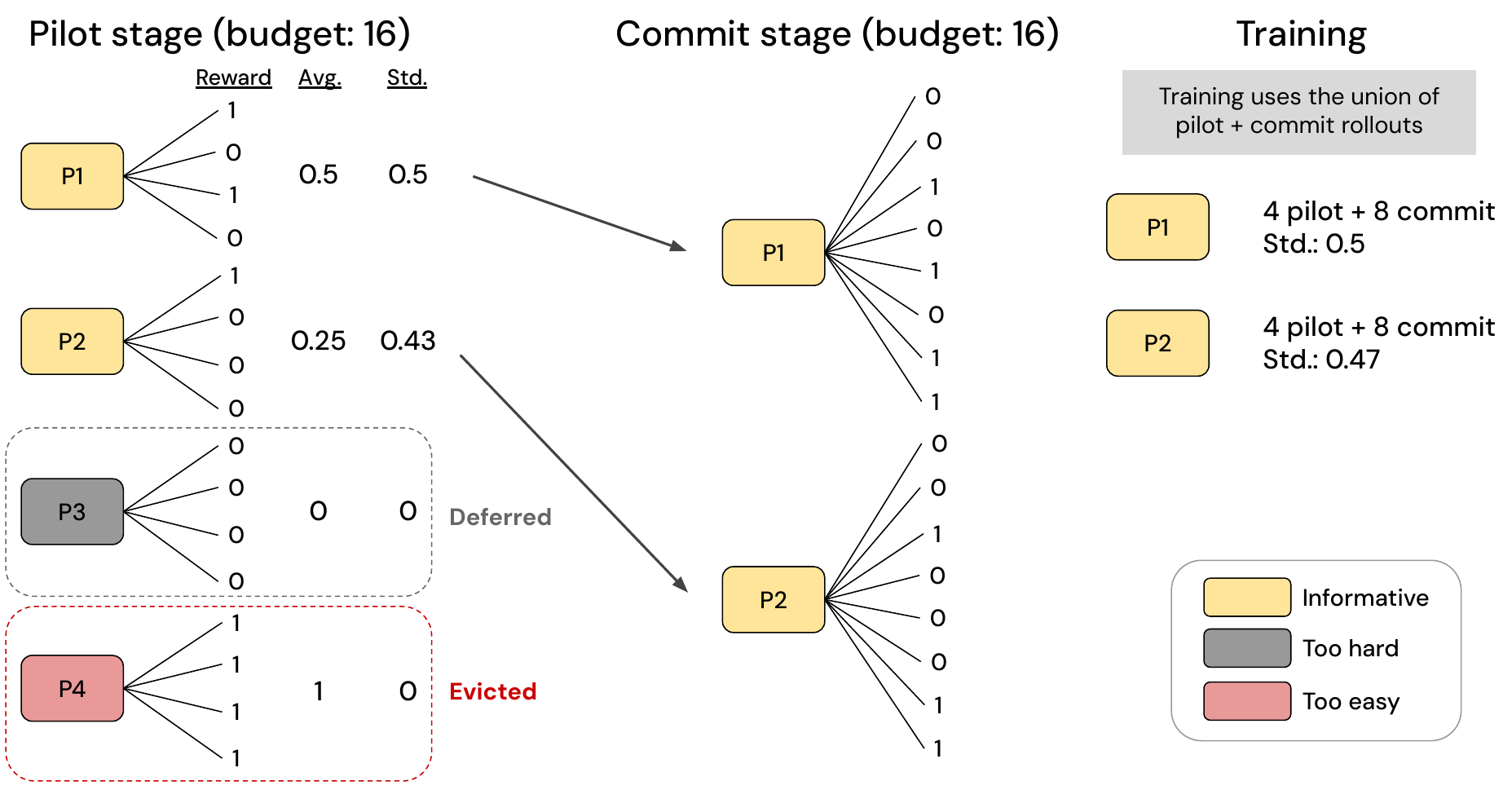}
    \caption{
     Overview of Pilot-Commit. Each box represents a prompt. In the pilot stage, a fraction of the rollout budget is used to estimate empirical reward variance per prompt. In the commit stage, the remaining budget is allocated to selected prompts ($P1$ and $P2$). The policy is then updated on the union of pilot and commit rollouts for the selected prompts. Prompt $P3$ is deemed too hard and deferred to the next epoch without proceeding to the commit stage. Prompt $P4$ is deemed too easy and evicted from future sampling.
    }
    \label{fig:pc-overview}
\end{figure*}

Large language models (LLMs) require post-training to adapt to specific objectives after pre-training~\citep{brown2020language, chowdhery2023palm}. Reinforcement learning (RL) has emerged as one of the most effective paradigms for this stage~\citep{schulman2017proximal, ouyang2022training, rafailov2023direct, shao2024deepseekmath}, enabling alignment with complex goals such as human preferences and mathematical reasoning.

However, online RL post-training introduces challenges that differ fundamentally from supervised fine-tuning (SFT)~\citep{yao2023deepspeed, mei2025real}. The training data distribution is not fixed but depends on the evolving policy, and inference is an integral part of the training loop, as rollouts must be generated online to compute rewards. As a result, the efficiency bottleneck in RL post-training is driven by sampling rather than optimization.

In group-based algorithms~\citep{shao2024deepseekmath, liu2025understanding}, advantages are computed relative to other rollouts from the same prompt, so the learning signal depends directly on reward diversity within each group. Prompts whose rollouts yield identical outcomes produce zero advantage and contribute nothing to the gradient, yet existing methods allocate the same budget to every prompt regardless. Oversampling strategies like DAPO~\citep{yu2025dapo} improve exploration but scale the sampling cost linearly, without distinguishing informative prompts from uninformative ones. We argue that rollout allocation is a first-class problem: given a fixed rollout budget per step, the question is not how many rollouts to generate, but which prompts to spend them on.

In this work, we propose \emph{Pilot-Commit} (PC), a rollout allocation framework for group-based RL post-training. In the \emph{pilot} stage, a fraction of the rollout budget is used to estimate reward diversity under the current policy. Prompts in the high reward-variance regime, where gradients are most informative, then receive the remaining \emph{commit} budget, while low-signal prompts are skipped. While the framework generalizes to continuous and multi-class rewards, we focus on verifiable binary rewards, where reward variance is a direct proxy for prompt informativeness.

A key distinction from curriculum learning is that Pilot-Commit does not impose a fixed ordering over prompt difficulty or modify the training objective. Instead, it adapts to the current policy online, reallocating rollouts toward prompts where the gradient signal is strongest under the evolving policy.

We show analytically that the GRPO gradient is maximized when reward outcomes are maximally uncertain. Since prompts become easier over the course of training, we apply asymmetric thresholds that retain hard prompts approaching this regime while skipping near-solved ones whose signal is diminishing. Skipped prompts are deferred to later epochs, not permanently removed. We additionally introduce an eviction strategy, a replay buffer, and a one-step pilot-commit binding to reduce overhead.

We evaluate Pilot-Commit on mathematical reasoning benchmarks across model scales from 1.5B to 14B parameters. As shown in \figref{fig:main}, PC consistently reaches baseline accuracy with fewer rollouts, using up to $1.9\times$ fewer than GRPO and $4.0\times$ fewer than DAPO across settings.
We implement Pilot-Commit as a new recipe in the verl~\citep{sheng2024hybridflow} RL framework and release the code alongside this paper.

\section{Related Work}
\label{sec:rw}

\paragraph{Policy Optimization for Language Models}
To reduce variance and engineering complexity associated with token-level value estimation in PPO~\citep{schulman2017proximal}, recent works replace explicit critic learning with group-based baselines.
GRPO~\citep{shao2024deepseekmath} computes advantages using multiple responses per prompt, eliminating the need for a learned critic.
Several extensions further explore group-based advantage estimation and sampling strategies~\citep{lin2025cppo, yu2025dapo, liu2025understanding}.

\paragraph{Inference Bottlenecks in RL Post-Training}
In online RL training, rollout generation often dominates the cost of gradient updates, making sampling the primary computational bottleneck.
Recent systems address this through high-throughput inference engines~\citep{kwon2023efficient, zheng2024sglang} and asynchronous training frameworks that overlap rollout generation with policy updates~\citep{mei2025real, wu2025llamarl}.
These system-level advances improve throughput but do not address the underlying inefficiency of allocating sampling budget to low-signal data. Our work is complementary, focusing on improving the effectiveness of each sampled rollout.

\paragraph{Prompt Selection for RL Post-Training}
Selectively focusing computation on informative examples is a well-studied idea in curriculum learning~\citep{bengio2009curriculum, kumar2010self}, active learning~\citep{lewis1994heterogeneous, settles2009active}, and hard example mining~\citep{shrivastava2016training, lin2017focal}.
In the context of LLM post-training, several recent methods adapt prompt selection to reduce rollout cost, differing primarily in how they estimate prompt informativeness.

One approach filters or reweights rollouts after generation~\citep{xu2504not, shrivastava2025sample}, which improves update quality but does not reduce the cost of rollout generation itself.
Another relies on statistics from previous epochs to reweight or skip prompts~\citep{zheng2025act, li2025knapsack, Polaris2025}, but these estimates can become stale as the policy evolves.
A third line trains auxiliary models to predict prompt difficulty: GPS~\citep{qu2025gps} uses a generative prompt predictor, MoPPS~\citep{qu2025mopps} maintains Beta posteriors over success rates, PDTS~\citep{qu2025pdts} applies posterior sampling with diversity regularization, and PCL~\citep{gao2025prompt} uses a learned value model updated concurrently with the policy.
Other works apply curriculum-style strategies based on reward variance~\citep{jiang2025vcrl, zhang2506speed}.

In contrast, Pilot-Commit estimates prompt informativeness online using pilot-stage feedback from the current policy, avoiding auxiliary models and stale historical estimates.
By reallocating a fixed rollout budget based on live reward variance, it reduces wasted sampling while preserving the training objective.

\section{Method}

\subsection{Preliminaries: GRPO}
\label{sec:grpo}

We consider a conditional language modeling setting in which a policy $\pi_\theta$, parameterized by $\theta$, generates an output sequence $o = (o_1, \dots, o_{|o|})$ conditioned on an input prompt $q$. At each time step $t$, the policy defines a conditional distribution $\pi_\theta(o_t \mid q, o_{<t})$ over the next token.

Policy optimization is formulated as a reinforcement learning problem. For each prompt $q \sim P(Q)$, we sample a group of $G$ output sequences $\{o_i\}_{i=1}^G$ independently from a fixed behavior policy $\pi_{\theta_{\mathrm{old}}}(\cdot \mid q)$. 
Updates are performed at the token level using importance sampling, leading to the Group Relative Policy Optimization (GRPO) objective with clipped policy gradients and KL regularization.

The GRPO objective is defined as
\begin{equation}
\begin{split}
\mathcal{J}(\theta) = &\mathbb{E}_{q, \{o_i\}_{i=1}^G} \Bigg[ 
\frac{1}{G}\sum_{i=1}^{G} \Bigg( \frac{1}{|o_i|} \sum_{t=1}^{|o_i|} \min \big( r_{i,t}(\theta) \hat{A}_{i,t}, \quad \text{clip}(r_{i,t}(\theta), 1 \pm \epsilon) \hat{A}_{i,t} \big) \Bigg)
- \beta \mathrm{D_{KL}}(\pi_\theta \| \pi_{\text{ref}}) \Bigg],
\end{split}
\end{equation}
where $\epsilon > 0$ is the clipping threshold, $\beta \ge 0$ is the KL regularization coefficient (set to 0 in our experiments), and $\pi_{\mathrm{ref}}$ is a fixed reference policy.

The token-level importance ratio is defined as
\begin{equation}
  r_{i,t}(\theta) =
\frac{\pi_\theta(o_{i,t} \mid q, o_{i,<t})}
{\pi_{\theta_{\mathrm{old}}}(o_{i,t} \mid q, o_{i,<t})}.  
\end{equation}

Each sampled sequence $o_i$ is assigned a scalar reward $r_i \in \mathbb{R}$. GRPO constructs advantages using group-relative normalization. For a fixed prompt $q$, let $\mu$ and $\sigma$ denote the within-group mean and standard deviation of the rewards:

\begin{equation}
\mu = \frac{1}{G} \sum_{i=1}^G r_i,
\quad
\sigma =
\sqrt{\frac{\sum_{i=1}^G (r_i - \mu)^2}{G}}.
\end{equation}

The group-relative advantage for sequence $o_i$ is $\hat{A}_i = (r_i - \mu)/\sigma$, broadcast to all tokens so that $\hat{A}_{i,t} = \hat{A}_i$ for all $t$.

\subsection{GRPO Learning Signal in the High Reward-Variance Regime}
\label{sec:grpo_high_variance}

We study when GRPO produces a strong group-wise policy-gradient signal.
To isolate the effect of group-relative advantages, we analyze a per-prompt surrogate that drops PPO clipping. Since $\beta=0$ in our experiments, this surrogate differs from the full objective only in the removal of clipping.

\paragraph{Per-prompt surrogate.}
For a fixed prompt $q$ with a sampled group $\{o_i\}_{i=1}^G \sim \pi_{\theta_{\mathrm{old}}}(\cdot\mid q)$, we consider
\begin{equation}
\mathcal{J}_{\mathrm{sur}}(\theta)
=
\frac{1}{G}\sum_{i=1}^G \frac{1}{|o_i|}\sum_{t=1}^{|o_i|}
r_{i,t}(\theta)\,\hat{A}_{i}, \quad \quad
\nabla_\theta \mathcal{J}_{\mathrm{sur}}(\theta)
=
\frac{1}{G}\sum_{i=1}^G \hat{A}_i\, g_i(\theta),
\end{equation}
where
\begin{equation}
    g_i(\theta) \;=\; \frac{1}{|o_i|}\sum_{t=1}^{|o_i|}
r_{i,t}(\theta)\nabla_\theta\log \pi_\theta(o_{i,t}\mid q,o_{i,<t}).
\end{equation}

We ask how the distribution of $\{\hat A_i\}_{i=1}^G$ affects the magnitude of the group gradient
$\lVert \nabla_\theta \mathcal{J}_{\mathrm{sur}}(\theta)\rVert$.

\begin{proposition}[Group-gradient magnitude under a two-cluster approximation]
\label{prop:two_cluster}
Assume there exist vectors $g^+,g^-$ such that $g_i(\theta)\approx g^+$ for samples with $r_i \ge \mu$
and $g_i(\theta)\approx g^-$ otherwise.
Let $S_+ := \{ i : r_i \ge \mu \}$ and define $S_+^{A}:=\sum_{i\in S_+}\hat A_i$.
Recalling that $\sum_{i=1}^G \hat A_i=0$ by construction, we have

\begin{equation}
    \lVert \nabla_\theta \mathcal{J}_{\mathrm{sur}}(\theta)\rVert
\;\approx\;
\frac{|S_+^{A}|}{G}\,\lVert g^+ - g^- \rVert.
\end{equation}
\end{proposition}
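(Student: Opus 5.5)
The plan is to substitute the two-cluster approximation directly into the gradient expression $\nabla_\theta \mathcal{J}_{\mathrm{sur}}(\theta)=\frac{1}{G}\sum_{i=1}^G \hat A_i g_i(\theta)$. We split the sum according to the partition $\{1,\dots,G\} = S_+ \cup S_-$, where $S_- := \{i : r_i < \mu\}$. Replacing $g_i(\theta)$ by $g^+$ on $S_+$ and by $g^-$ on $S_-$ gives
\begin{equation*}
\nabla_\theta \mathcal{J}_{\mathrm{sur}}(\theta) \approx \frac{1}{G}\Big(S_+^{A}\, g^+ + S_-^{A}\, g^-\Big),
\qquad S_-^{A} := \sum_{i\in S_-}\hat A_i .
\end{equation*}

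The key step is to use the zero-sum identity $\sum_{i=1}^G \hat A_i = 0$. This identity holds because $\sum_i (r_i-\mu)=0$ by the definition of $\mu$, provided $\sigma>0$ so that the advantages are well defined. It gives $S_-^{A} = -S_+^{A}$, so the expression collapses to $\frac{S_+^{A}}{G}(g^+ - g^-)$. Taking norms and using homogeneity, $\lVert c\,v\rVert = |c|\,\lVert v\rVert$, yields the claimed $\frac{|S_+^{A}|}{G}\lVert g^+-g^-\rVert$.

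I would also note that $S_+^{A}\ge 0$, since every $\hat A_i$ with $i\in S_+$ is nonnegative, so the absolute value is harmless.

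I would then make the meaning of ``$\approx$'' precise. Write $g_i = g^{\pm} + e_i$ with $\lVert e_i\rVert \le \delta$. The error term is $\frac{1}{G}\sum_i \hat A_i e_i$. By the triangle inequality its norm is at most $\frac{\delta}{G}\sum_i |\hat A_i| = \frac{2\delta}{G} S_+^{A}$, where the equality again uses the zero-sum identity. The reverse triangle inequality then gives
\begin{equation*}
\Big|\,\lVert\nabla_\theta \mathcal{J}_{\mathrm{sur}}(\theta)\rVert - \tfrac{S_+^{A}}{G}\lVert g^+-g^-\rVert\,\Big| \le \tfrac{2\delta}{G}S_+^{A},
\end{equation*}
which turns the approximation into a quantitative statement.

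The main obstacle is not the algebra but the degenerate case $\sigma=0$, in which all rewards are equal. There the advantages are undefined, or conventionally set to zero, and $S_-$ is empty. I would treat it separately by adopting the convention $\hat A_i=0$, under which both sides are zero. The remaining cases are covered by the argument above.
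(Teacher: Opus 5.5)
Your proposal is correct and follows essentially the same route as the paper: split the sum over $S_+$ and $S_-$, use $\sum_i \hat A_i = 0$ to get $S_-^{A} = -S_+^{A}$, factor out $\frac{S_+^{A}}{G}(g^+ - g^-)$, and take norms. Your quantitative error bound $\frac{2\delta}{G}S_+^{A}$ (via $\sum_i |\hat A_i| = 2S_+^{A}$) and your handling of the $\sigma = 0$ case are correct refinements that the paper does not include.
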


This proposition shows that the magnitude of the surrogate gradient scales with $|S_+^{A}|$. We provide the proof in Appendix~\ref{sec:proof1}.

\paragraph{High-variance regime.}

Here we show that $|S_+^{A}|$ is maximized when advantages exhibit substantial dispersion within the group.
Let $d_i=r_i-\mu$ (so $\sum d_i=0$), and recall that $\hat A_i=d_i/\sigma$.
Then the total positive advantage mass satisfies
\begin{equation}
    S_+^{A}
\;=\;
\sum_{i\in S_+}\hat A_i
\;\le\;
\frac{G}{2}.
\end{equation}
We provide the derivation in Appendix~\ref{sec:proof2}. 

The upper bound is tight for even $G$ when $|d_i|$ is constant across the group and exactly half of the $d_i$ are positive
(and half negative), i.e., $d_i\in\{+a,-a\}$ with equal counts.
In the binary reward case, this corresponds to a success rate of $p=0.5$, where reward variance $p(1-p)$ is maximized.

Combining with Proposition~\ref{prop:two_cluster}, the GRPO gradient is largest when group outcomes are maximally uncertain, i.e., when the within-group reward variance is highest.

\subsection{Pilot-Commit}
\label{sec:pilot_commit}

The analysis in Section~\ref{sec:grpo_high_variance} highlights the importance of prompts with high reward variance. In online RL, however, the reward distribution of a given prompt evolves as the policy updates, making it computationally prohibitive to roll out on the full prompt set at every training step. 

We therefore propose \emph{Pilot-Commit}, a sampling strategy that concentrates rollout budget on prompts exhibiting high reward variance under the current policy. Figure~\ref{fig:pc-overview} provides an overview of the algorithm, with full details given in Algorithm~\ref{alg:pc} in the Appendix. Although the framework applies to any scalar reward signal, we focus on the binary reward setting, where the success rate directly determines reward variance and provides a clean, interpretable proxy for prompt informativeness.

\paragraph{Pilot stage.}
Let $\mathcal{D}$ be the dataset of prompts and let $\mathcal{B} \subset \mathcal{D}$ be the sampling batch of size $b_g$.
For each prompt $q \in \mathcal{B}$, we sample $n_{\mathrm{pilot}}$ rollouts $o_j \sim \pi_\theta(\cdot \mid q)$ and compute rewards
$r_j = R(q,o_j)$, where $R(q,o)\in\{0,1\}$.
We estimate the success rate
\[
\hat p(q) = \frac{1}{n_{\mathrm{pilot}}}\sum_{j=1}^{n_{\mathrm{pilot}}} r_j.
\]
For binary rewards, the per-prompt variance satisfies $\mathrm{Var}[r\mid q] = p(1-p)$, so $\hat p(q)$ serves as a proxy for reward variance.
We keep prompts whose pilot success rate lies in a predefined range:
\[
\mathcal{B}' =
\left\{ q \in \mathcal{B} \mid
p_{\mathrm{lower}} \le \hat p(q) \le p_{\mathrm{upper}}
\right\}.
\]

Although $p(1-p)$ is symmetric around $\tfrac12$, we use asymmetric thresholds ($p_{\mathrm{lower}} < 1 - p_{\mathrm{upper}}$) to reflect the directional dynamics of training: prompts typically transition from unsolved ($p\approx 0$) through the high-variance regime ($p\approx 0.5$) toward solved ($p\approx 1$).
Hard prompts with rare correct rollouts are approaching the high-variance regime and should be retained, while near-solved prompts have already passed through it and their gradient signal is diminishing.

\paragraph{Commit stage.}
In the commit stage, we allocate the remaining rollouts only to prompts that passed the pilot filter. For each $q \in \mathcal{B}'$, we draw $n_{\mathrm{commit}}$ more samples
$\{o^{\mathrm{commit}}_j\}_{j=1}^{n_{\mathrm{commit}}}$ from $\pi_\theta(\cdot\mid q)$.
In practice, the number of retained prompts is capped at the training batch size $b_t$, where $b_t \leq b_g$. The replay buffer described below handles extra pilot survivors.

\paragraph{Training.}
We train on the union of pilot and commit rollouts for the selected prompts. For each retained prompt $q$, the training set includes
\[
\{(o_j, r_j)\}_{j=1}^{n_{\mathrm{pilot}}} \;\cup\; \{(o^{\mathrm{commit}}_j, r^{\mathrm{commit}}_j)\}_{j=1}^{n_{\mathrm{commit}}}.
\]

\subsection{Pilot-Commit Optimizations}
We introduce the following optimizations to make Pilot-Commit practical at scale.

\paragraph{Binding pilot and commit.}
A straightforward implementation would require two sequential inference passes per step (pilot then commit).
To reduce this overhead, we bind the pilot stage at step $t\!-\!1$ to the commit stage at step $t$, allowing the two passes to be overlapped.
This introduces an off-policy delay between the pilot and commit policies: one step from the binding itself, and up to $d$ steps for prompts drawn from the replay buffer. Prior work suggests small off-policy delays have limited impact in this regime~\citep{bartoldson2025trajectory}.
\paragraph{Replay buffer for pilot survivors.}
When the pilot stage yields more retained prompts than required, we store the excess prompts in a replay buffer rather than letting them go unused. Conversely, when the current pilot stage yields too few retained prompts, we sample additional prompts from the buffer to fill the batch. To limit staleness, we evict buffered prompts whose age exceeds $d$ steps.

\paragraph{Evicting solved prompts.}
As training progresses, prompts that initially have high uncertainty may become consistently solved. We provide empirical evidence in Section~\ref{sec:eviction}. To avoid repeatedly spending pilot rollouts on them, we evict prompts whose pilot success rate exceeds a threshold $p_{\mathrm{solve}}$ from the pilot sampling pool for subsequent epochs.

\section{Experimental Setup}

\subsection{Data and Models}

We evaluate Pilot-Commit across four model scales:
Qwen2.5-Math-1.5B~\citep{yang2024qwen2},
Qwen3-4B-Base,
Qwen3-8B-Base, and
Qwen3-14B-Base~\citep{yang2025qwen3}.
We use two training datasets depending on model scale.
For the 1.5B model, we train on DeepMath-103K~\citep{he2025deepmath}, which was extensively decontaminated against standard evaluation benchmarks. To further increase task difficulty, we remove binary (yes/no) questions, yielding approximately 85K training prompts.
For larger models (4B, 8B, 14B), we train on Polaris-53K~\citep{Polaris2025}. At these scales, DeepMath-103K becomes too easy: many prompts enter the near-solved regime early, causing reward saturation and training instability. Polaris-53K provides a more informative training distribution.

Evaluation is performed every 5 training steps on six mathematical reasoning benchmarks: AIME 2024, AIME 2025, AMC 2023, Math500, Minerva Math, and OlympiadBench. We additionally evaluate on a held-out DeepMath test set consisting of 500 prompts. For small benchmarks such as AIME and AMC, we report the mean accuracy over 32 independent trials to reduce variance.

\subsection{Training Details}

All experiments use a training batch size of 128 and a PPO minibatch size of 32. We train 1.5B for 1{,}000 steps and larger models for 500 steps, as they converge faster and showed training instability with extended training. Rewards are computed using the math verifier from Tinker-Cookbook~\citep{tml2025tinker}.

We do not apply an explicit KL penalty and use asymmetric clipping ratios of 0.2 (lower) and 0.28 (upper) following DAPO~\citep{yu2025dapo}. The maximum response length is 8{,}192 tokens for the 1.5B model and 4{,}096 for larger models. We use temperature 1.0 and $\texttt{top\_p}=1.0$ during training, and temperature 0.7 during evaluation. The learning rate is $1\times 10^{-6}$ with a 10-step warmup.

\paragraph{Pilot-Commit hyperparameters.}
For filtering based on pilot rollouts, we apply asymmetric thresholds. A prompt is considered too easy and skipped if its success rate is $> 0.75$. A prompt is considered too difficult and deferred if its success rate is $< 0.125$.

We allow a maximum off-policy delay of $d=4$ steps between pilot and commit. For eviction, we use a strict threshold of 1, evicting a prompt only when all pilot rollouts are correct.

\vspace{-5pt}
\subsection{Baselines and Sampling Cost}

We compare Pilot-Commit (PC) against two baselines: standard GRPO~\citep{shao2024deepseekmath} and DAPO~\citep{yu2025dapo}. DAPO oversamples prompts uniformly, assigns the same rollout budget to each, and filters out prompts with all-correct or all-incorrect rewards.

To compare efficiency, we analyze the sampling cost of each method. Let $n$ denote the rollout count per prompt for GRPO and DAPO. Recall that $b_t$ is the training batch size and $b_g = s \cdot b_t$ is the sampling batch size, where $s \geq 1$ is the oversampling factor. We set $s=3$ (i.e., $b_g = 3 \cdot b_t$), matching the oversampling factor used by DAPO so that both methods evaluate the same number of prompts per step at the pilot stage.

The sampling cost per step is:
\vspace{-2pt}
\begin{itemize}
\vspace{-2pt}
  \setlength{\itemsep}{0pt}
  \setlength{\topsep}{0pt}
  \setlength{\parskip}{0pt}
  \item \textbf{GRPO:} $b_t \times n$
  \item \textbf{DAPO:} $b_g \times n$
  \item \textbf{Pilot-Commit:} $b_g \times n_{\mathrm{pilot}} + b_t \times n_{\mathrm{commit}}$
  \vspace{-2pt}
\end{itemize}
\vspace{-2pt}

To match total training rollouts, we set $n = n_{\mathrm{pilot}} + n_{\mathrm{commit}}$, although PC can allow further saving by setting $n_{\mathrm{pilot}} + n_{\mathrm{commit}} < n$. In our experiments, we use $(n_{\mathrm{pilot}}, n_{\mathrm{commit}}) = (32, 96)$ for $n{=}128$, $(16, 48)$ for $n{=}64$, and $(8, 8)$ for $n{=}16$.
Overall, sampling cost increases in the order GRPO, Pilot-Commit, and DAPO.

\section{Results}

\subsection{Rollout Efficiency}
\label{sec:rollout_efficiency}

Our primary claim is that Pilot-Commit improves rollout efficiency: the rate of accuracy gained per generated rollout.
To evaluate this, we measure the cumulative rollout budget each method requires to reach a common accuracy target.
We set the target to the peak average accuracy achieved by GRPO in each setting, so that all three methods are compared against a level that the baseline is known to reach.
We report results across four model scales, covering both ample ($n{=}64$ or $128$) and limited ($n{=}16$) rollout budgets.

Table~\ref{table:rollout_efficiency} reports rollout-to-target results across all eight settings.
Under ample budgets ($n{=}64$ or $128$), PC reaches the target first in all four settings, using $1.5$--$1.9\times$ fewer rollouts than GRPO and $2.3$--$4.0\times$ fewer than DAPO.
Under limited budgets ($n{=}16$), PC still provides meaningful savings at 4B and 8B ($1.3$--$1.4\times$ fewer than GRPO, $2.3$--$3.3\times$ fewer than DAPO).
At 1.5B and 14B with $n{=}16$, PC and GRPO reach the target at comparable budgets (within 2\%), though PC achieves higher peak accuracy at 14B.

The larger gains under ample budgets are expected: when $n$ is large, each skipped prompt saves a full $n_{\mathrm{commit}}$ rollouts, so the absolute savings from filtering scale with the per-prompt budget. Under limited budgets, the pilot stage consumes a larger fraction of the total budget ($n_{\mathrm{pilot}} = 8$ out of $n = 16$), leaving less room for differential allocation. PC still avoids wasting commit rollouts on uninformative prompts, but the margin narrows.

\begin{table*}[t]
\caption{
Cumulative rollouts (in millions) to first reach the target average accuracy. The target is set to the peak accuracy of GRPO in each setting. \textbf{Bold} = fewest rollouts. ``*'' indicates the target is set to PC's peak when GRPO's peak is lower. ``never'' indicates the method did not reach the target within the training budget.
}
\label{table:rollout_efficiency}
\centering
\begin{tabular}{llccccccc}
\toprule
Config & Budget & Target & PC & GRPO & DAPO & GRPO/PC & DAPO/PC \\
\midrule
1.5B DeepMath & ample ($n{=}128$) & 44.0\% & \textbf{10.57M} & 15.97M & 42.02M & \textbf{1.5$\times$} & \textbf{4.0$\times$} \\
4B Polaris & ample ($n{=}64$) & 47.3\%* & \textbf{1.66M} & 2.50M & never & \textbf{1.5$\times$} & N/A \\
8B Polaris & ample ($n{=}64$) & 47.3\% & \textbf{1.04M} & 1.56M & 2.70M & \textbf{1.5$\times$} & \textbf{2.6$\times$} \\
14B Polaris & ample ($n{=}64$) & 57.0\%* & \textbf{2.15M} & 4.10M & 4.92M & \textbf{1.9$\times$} & \textbf{2.3$\times$} \\
\midrule
1.5B DeepMath & limited ($n{=}16$) & 41.3\% & \textbf{1.93M} & 1.97M & 4.49M & \textbf{1.0$\times$} & \textbf{2.3$\times$} \\
4B Polaris & limited ($n{=}16$) & 45.0\% & \textbf{0.51M} & 0.73M & 1.69M & \textbf{1.4$\times$} & \textbf{3.3$\times$} \\
8B Polaris & limited ($n{=}16$) & 47.1\% & \textbf{0.47M} & 0.63M & 1.26M & \textbf{1.3$\times$} & \textbf{2.7$\times$} \\
14B Polaris & limited ($n{=}16$) & 54.3\% & \textbf{0.51M} & 0.52M & never & 1.0$\times$ & N/A \\
\bottomrule
\end{tabular}
\end{table*}

\subsection{Quality of Each Policy Update}
\label{sec:update_quality}

Beyond reaching target accuracy faster, PC also produces more effective policy updates per training step.
We attribute this to the pilot-commit filtering, which concentrates each batch on prompts in the high reward-variance regime (Section~\ref{sec:grpo_high_variance}).

\paragraph{Reward variance.}
\figref{fig:update-quality}(a--b) plots the mean per-prompt reward standard deviation at each training step.
PC consistently maintains the highest reward variance throughout training, while GRPO's reward variance declines steadily as more prompts become saturated.
DAPO's oversampling partially mitigates this decline but still falls below PC.
This confirms that pilot-commit filtering concentrates updates on prompts with stronger gradient signals.

\paragraph{Peak accuracy.}
\figref{fig:update-quality}(c--d) shows average accuracy versus training steps.
Despite the same number of training steps, PC reaches higher peak accuracy than both GRPO and DAPO.
Combined with the reward variance analysis above, this suggests that each PC update receives a stronger gradient signal, improving not only the speed but also the quality of learning.

\begin{figure*}[!t]
    \centering
    \includegraphics[width=\textwidth]{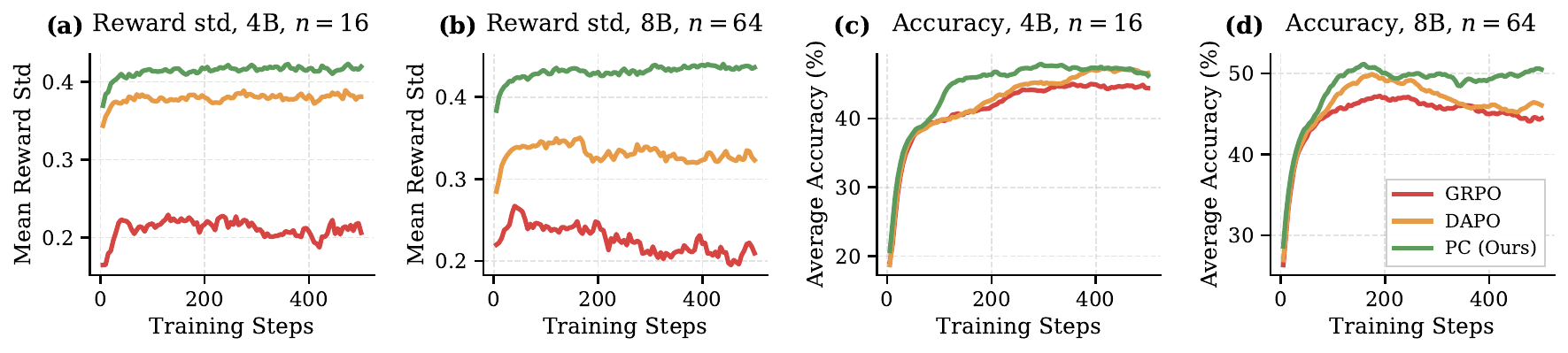}
    \caption{Quality of each policy update (all methods perform the same number of updates, one per step). (a--b) Mean per-prompt reward standard deviation over training steps. (c--d) Average accuracy versus training steps. Per-benchmark breakdowns are in the Appendix.}
    \label{fig:update-quality}
\end{figure*}

\subsection{Wall-Clock Runtime}
\label{sec:runtime}

Rollout count is our primary efficiency metric, but we also report wall-clock runtime to confirm that rollout savings translate into practical cost reductions.
Table~\ref{table:runtime} reports per-step runtime on Qwen3-14B ($n{=}64$, 64 H200 GPUs with TP=1), a highly parallelized setup where rollout generation is distributed across 64 inference engines.

Since all methods share the same training batch size, the policy update time is roughly constant across methods (${\sim}104$--$115$s).
Rollout generation is therefore the only variable cost component, and reducing rollout count directly reduces wall-clock time.
Even in this parallelism-heavy regime where rollout time is already well-amortized, the wall-clock difference remains substantial (1.21$\times$ for PC vs 1.83$\times$ for DAPO). With fewer inference engines or larger models requiring tensor parallelism, rollout generation would constitute a larger fraction of step time, further favoring rollout-efficient methods.

\begin{table*}[t]
\caption{Per-step wall-clock runtime on Qwen3-14B ($n{=}64$, 8 nodes, 64 H200 GPUs). All times are in seconds, averaged over 500 steps (excluding evaluation steps for total time).}
\label{table:runtime}
\centering
\resizebox{\textwidth}{!}{
\begin{tabular}{lc|ccc|cc}
\toprule
Method & Rollouts/step & Rollout time & Update time & Total time & Rollout slowdown & Total slowdown \\
\midrule
GRPO & 8{,}192 & 106 & 104 & 248 & 1.00$\times$ & 1.00$\times$ \\
PC & 12{,}288 & 138 & 115 & 299 & 1.30$\times$ & 1.21$\times$ \\
DAPO & 24{,}576 & 280 & 110 & 453 & 2.64$\times$ & 1.83$\times$ \\
\bottomrule
\end{tabular}}
\end{table*}

\subsection{Final Accuracy under Fixed Training Steps}
\label{sec:fixed_step}

For completeness, we report peak per-benchmark accuracy under a fixed number of training steps in Tables~\ref{table:ample_budget} and~\ref{table:limited_budget} in the Appendix.
Note that this comparison does not equalize compute: under ample budgets each PC step generates $1.5\times$ the rollouts of GRPO ($2\times$ under limited budgets), and DAPO generates $3\times$, so matching steps implicitly grants more rollout budget to the costlier methods.
Even under this comparison, PC achieves the highest or comparable peak accuracy in 7 of 8 settings, while consuming fewer total rollouts than DAPO in every case.

\section{Analysis}

\subsection{Tradeoff Between Pilot and Commit Allocation}
\label{sec:budget_alloc}

We sweep over different pilot/commit splits $(n_{\mathrm{pilot}}, n_{\mathrm{commit}})$ to understand which component of rollout cost drives performance: the total number of rollouts generated (sampling cost), or the number used for training.

\paragraph{Equal training cost.}
We fix $n_{\mathrm{pilot}} + n_{\mathrm{commit}} = 64$, so every configuration trains on the same number of rollouts per prompt, but pilot-heavy splits generate more total rollouts due to the $3\times$ sampling batch (Figure~\ref{fig:alloc_train_fixed_mean}).
Lightweight pilot allocations such as $(8,56)$ and $(16,48)$ achieve the best rollout efficiency, reaching a given accuracy with fewer cumulative rollouts.
We adopt $(16,48)$ in our main experiments; although $(8,56)$ is slightly more efficient, it is more prone to premature eviction due to limited pilot evidence.

\paragraph{Equal sampling cost.}
We hold the total rollouts per step constant at $12{,}288$ while varying the split (Figure~\ref{fig:alloc_sample_fixed_mean}).
Despite large differences in the number of training rollouts per update (ranging from ${\sim}40$ to ${\sim}80$), all configurations converge to similar final accuracy.
This confirms that total sampling cost, not the number of training rollouts, is the primary factor governing performance, and that diverting a portion of the rollout budget to prompt selection incurs no accuracy penalty.

\begin{figure}[!t]
    \centering
    \begin{subfigure}[t]{0.48\columnwidth}
        \centering
        \includegraphics[width=\linewidth]{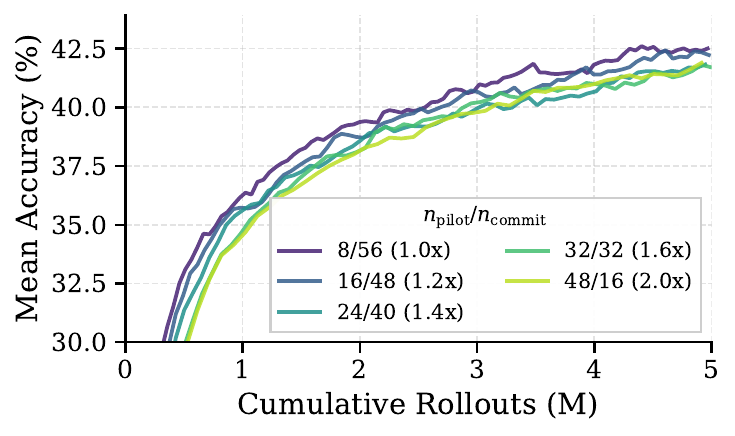}
        \caption{Equal training cost.}
        \label{fig:alloc_train_fixed_mean}
    \end{subfigure}
    \hfill
    \begin{subfigure}[t]{0.48\columnwidth}
        \centering
        \includegraphics[width=\linewidth]{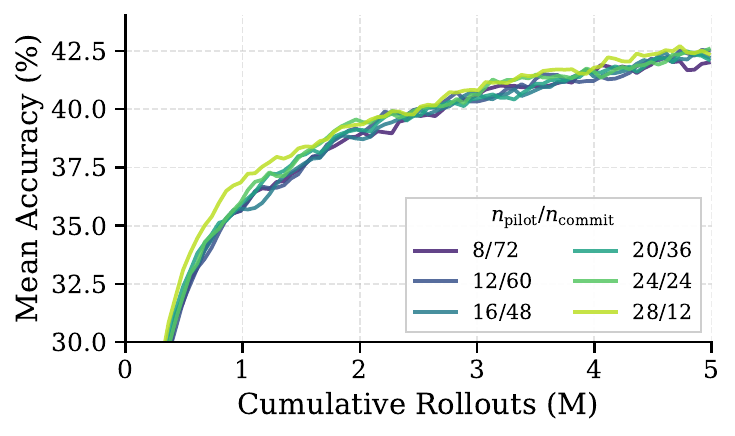}
        \caption{Equal sampling cost.}
        \label{fig:alloc_sample_fixed_mean}
    \end{subfigure}
    \caption{Pilot/commit allocation sweep on 1.5B DeepMath. (a) Equal training cost: training rollouts per prompt are fixed ($n_{\mathrm{pilot}} + n_{\mathrm{commit}} = 64$) while varying the split. Parenthetical values show relative sampling cost per step (normalized to the cheapest configuration). (b) Equal sampling cost: total rollouts per step are held constant at $12{,}288$ while varying the pilot/commit split; the number of training rollouts per update differs.}
    \label{fig:budget_alloc}
\end{figure}

\subsection{Hyperparameter Robustness}
\label{sec:hp_robustness}

Pilot-Commit introduces several hyperparameters beyond standard GRPO: the pilot thresholds $p_{\mathrm{lower}}$ and $p_{\mathrm{upper}}$, and the eviction threshold $p_{\mathrm{solve}}$.
We conduct ablations on a Qwen2.5-Math-1.5B model trained for 1{,}000 steps on Polaris-53K.
Overall, Pilot-Commit is robust to these hyperparameters across a wide range of values, though optimal thresholds may vary with the difficulty distribution of the training data.

\paragraph{Upper threshold ($p_{\mathrm{upper}}$).}
We sweep $p_{\mathrm{upper}} \in \{1.0, 0.875, 0.75, 0.625\}$ (Figure~\ref{fig:hp_ablation}a).
All four settings, including no filtering ($p_{\mathrm{upper}}{=}1.0$), converge to similar final accuracy, indicating that performance is robust to this threshold.
We default to $0.75$ as a conservative choice.

\paragraph{Lower threshold ($p_{\mathrm{lower}}$).}
We sweep $p_{\mathrm{lower}} \in \{0.0, 0.125, 0.25, 0.375\}$ (Figure~\ref{fig:hp_ablation}b).
Both extremes underperform slightly: no filtering ($p_{\mathrm{lower}}{=}0.0$) wastes budget on hopeless prompts, while aggressive filtering ($p_{\mathrm{lower}}{=}0.375$) discards borderline prompts that still provide useful signal.
Our default of $0.125$ achieves the best accuracy, balancing these two effects.

\paragraph{Eviction threshold ($p_{\mathrm{solve}}$).}
We compare $p_{\mathrm{solve}} \in \{0.75, 0.875, 1.0\}$ (Figure~\ref{fig:hp_ablation}c).
All three settings converge to similar final accuracy, despite substantially different eviction rates: $p_{\mathrm{solve}}{=}0.75$ evicts ${\sim}35\%$ of prompts vs ${\sim}22\%$ for $1.0$.
We default to $p_{\mathrm{solve}}{=}1.0$, the most conservative setting, to minimize the risk of premature eviction.

\begin{figure*}[!t]
    \centering
    \includegraphics[width=\textwidth]{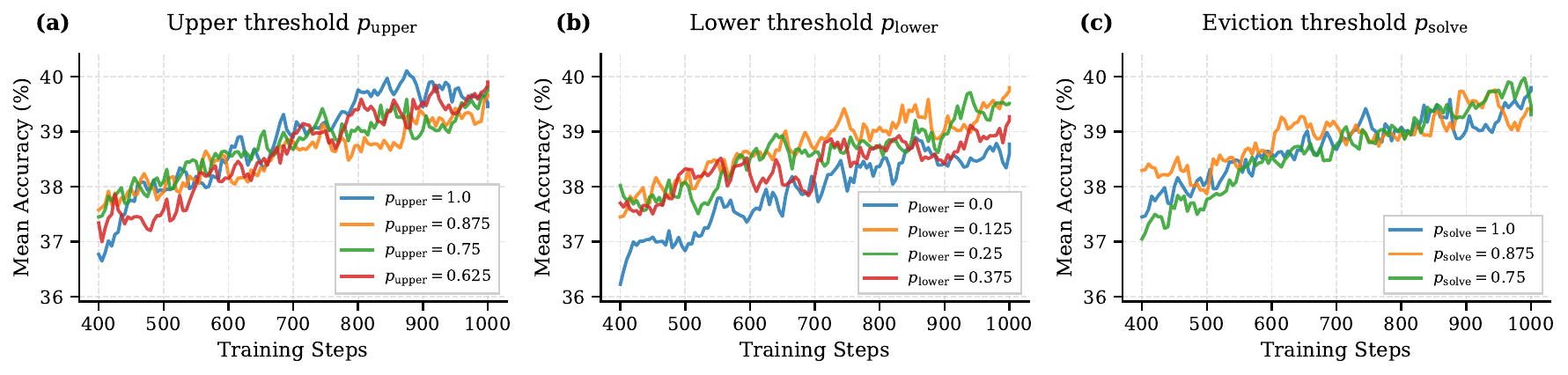}
    \caption{Hyperparameter ablations on 1.5B Polaris. (a) Upper threshold $p_{\mathrm{upper}}$: controls filtering of near-solved prompts ($1.0$ = no filtering). (b) Lower threshold $p_{\mathrm{lower}}$: controls filtering of hard prompts ($0.0$ = no filtering). (c) Eviction threshold $p_{\mathrm{solve}}$: controls when solved prompts are permanently removed from the dataset.}
    \label{fig:hp_ablation}
\end{figure*}

\subsection{Impact of the Eviction Strategy}
\label{sec:eviction}

Pilot-Commit's pilot batch is $3\times$ the training batch, so PC cycles through prompts much faster than GRPO and encounters already-solved prompts within a few hundred steps.
Eviction permanently removes these prompts to avoid wasting sampling budget.

\paragraph{Solved prompts are stable.}
We verify that solved prompts rarely revert: using GRPO (1.5B) trained for three epochs, 81.2\% of prompts fully solved in Epoch~2 remain solved in Epoch~3, justifying permanent removal.

\paragraph{Eviction reduces cost without hurting accuracy.}
Table~\ref{table:eviction} compares PC with and without eviction.
Without eviction, 35.8\% of training steps require a second sampling round to fill the training batch, raising the mean rollouts per step to 33{,}379.
With eviction, only 4.5\% of steps need a second round, reducing mean rollouts to 25{,}676 (a 23\% saving) while peak accuracy is unchanged (46.8\% vs 46.9\%).

\begin{table}[t]
\centering
\small
\setlength{\tabcolsep}{4pt}
\begin{tabular}{@{}lcc@{}}
\toprule
Method & Acc.\ (\%) & Rollouts / Step \\
\midrule
PC (no eviction)  & \textbf{46.9} & 33{,}379 \\
PC (with eviction) & 46.8         & \textbf{25{,}676} \\
\bottomrule
\end{tabular}
\caption{Effect of eviction (1.5B DeepMath, $n{=}128$).}
\label{table:eviction}
\end{table}

\subsection{Data Efficiency of Pilot-Commit}

By design, PC's pilot stage screens more prompts per step than end up in the training batch, so PC traverses the dataset faster than GRPO for the same number of optimization steps.
Figure~\ref{fig:data-eff} plots accuracy against total prompts consumed: GRPO achieves higher accuracy per prompt, followed by PC and DAPO.
This tradeoff is inherent to screening-based allocation: pilot rollouts estimate per-prompt informativeness at the cost of faster dataset traversal.
Crucially, RL post-training is usually multi-epoch: prompts cycle back across epochs, so prompts deferred by the pilot are not permanently lost but re-encountered in later passes.
Over 1{,}000 steps (${\sim}85$K prompts in the dataset), GRPO completes ${\sim}1.5$ epochs; PC screens the same number of prompts as DAPO (${\sim}4.5$ epochs), but eviction of solved prompts shrinks the active pool, so PC effectively cycles through ${\sim}6.2$ epochs, giving each prompt multiple chances to pass the pilot filter.
In our setting, rollout generation dominates wall-clock time, so we optimize for cumulative rollouts rather than prompt reuse.

\begin{figure}[t]
    \centering
    \includegraphics[width=0.5\columnwidth]{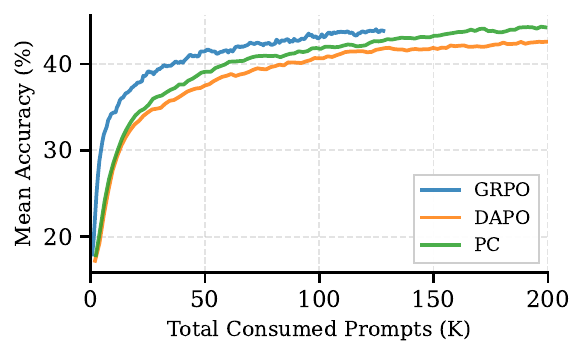}
    \caption{Accuracy vs.\ total consumed prompts (1.5B DeepMath, $n{=}128$); horizontal axis truncated at $200{\mathrm{k}}$ prompts.}
    \label{fig:data-eff}
\end{figure}
\section{Limitations \& Future Work}
Pilot-Commit currently assumes verifiable, binary rewards, where the success rate directly determines reward variance.
Extending the pilot estimator to continuous or noisy reward signals, such as RLHF with learned reward models, is the most immediate open problem.
Our filtering thresholds ($p_{\mathrm{lower}}, p_{\mathrm{upper}}$) are also static throughout training.
Early in training, when most prompts have near-zero success rates, more permissive thresholds may be appropriate; adapting them to the evolving difficulty distribution could further improve efficiency.

\section{Conclusion}
Group-based RL post-training spends the majority of its compute on rollout generation, yet existing methods allocate rollouts uniformly across prompts regardless of learning signal.
Pilot-Commit addresses this by splitting the rollout budget into a pilot stage that estimates per-prompt informativeness and a commit stage that concentrates the remaining rollouts on prompts in the high reward-variance regime where policy gradients are strongest.
Across model scales from 1.5B to 14B and multiple math reasoning benchmarks, this strategy matches baseline accuracy while reaching it up to $1.9\times$ faster than GRPO and $4.0\times$ faster than DAPO in cumulative rollouts.

\clearpage

\bibliographystyle{assets/plainnat}
\bibliography{ref}

%%%%%%%%%%%%%%%%%%%%%%%%%%%%%%%%%%%%%%%%%%%%%%%%%%%%%%%%%%%%%%%%%%%%%%%%%%%%%%%
%%%%%%%%%%%%%%%%%%%%%%%%%%%%%%%%%%%%%%%%%%%%%%%%%%%%%%%%%%%%%%%%%%%%%%%%%%%%%%%
% APPENDIX
%%%%%%%%%%%%%%%%%%%%%%%%%%%%%%%%%%%%%%%%%%%%%%%%%%%%%%%%%%%%%%%%%%%%%%%%%%%%%%%
%%%%%%%%%%%%%%%%%%%%%%%%%%%%%%%%%%%%%%%%%%%%%%%%%%%%%%%%%%%%%%%%%%%%%%%%%%%%%%%
\newpage
\appendix
\onecolumn
\section{Appendix}

\subsection{Pilot-Commit Algorithm}

\begin{algorithm}[tb]
\caption{Pilot-Commit}
\label{alg:pc}
\begin{algorithmic}[1]

\Require $n_{\mathrm{pilot}}$, $n_{\mathrm{commit}}$, thresholds $p_{\mathrm{lower}}, p_{\mathrm{upper}}, p_{\mathrm{solve}}$, max delay $d$, dataset $\mathcal{D}$, sampling batch size $b_g$, training batch size $b_t$ ($b_g > b_t$)

\State $\mathcal{R} \gets \emptyset$ \Comment{Replay buffer}
\State $t \gets 0$

\For{each epoch}
  \For{each sampling batch $\mathcal{B} \subset \mathcal{D}$, $|\mathcal{B}| = b_g$}

    \State \textbf{Pilot:} Sample $n_{\mathrm{pilot}}$ rollouts per prompt $q \in \mathcal{B}$; compute $\hat p(q)$
    \State $\mathcal{B}' \gets \{ q \in \mathcal{B} \mid p_{\mathrm{lower}} \le \hat p(q) \le p_{\mathrm{upper}} \}$ \Comment{Filter}
    \State Remove $\{ q \in \mathcal{B} \mid \hat p(q) \ge p_{\mathrm{solve}} \}$ from $\mathcal{D}$ \Comment{Evict solved prompts}
    \State $\mathcal{R} \gets \mathcal{R} \cup \mathcal{B}'$ \Comment{Buffer survivors}

    \State \textbf{Commit:} \Comment{Prompts in $\mathcal{R}$ may come from previous steps}
    \While{fewer than $b_t$ prompts selected}
      \State Sample prompt $q$ from $\mathcal{R}$
      \State Draw $n_{\mathrm{commit}}$ rollouts from current policy $\pi_\theta(\cdot \mid q)$
    \EndWhile
    \State Train on union of stored pilot + new commit rollouts for $b_t$ selected prompts
    \State Remove entries from $\mathcal{R}$ older than $d$ steps
    \State $t \gets t + 1$

  \EndFor
\EndFor

\end{algorithmic}
\end{algorithm}

\subsection{Proof of Proposition~\ref{prop:two_cluster}}
\label{sec:proof1}

\begin{proof}
Fix a prompt $q$ and a sampled group $\{o_i\}_{i=1}^G$. Recall
\[
\nabla_\theta \mathcal{J}_{\mathrm{sur}}(\theta)
=
\frac{1}{G}\sum_{i=1}^G \hat{A}_i\, g_i(\theta).
\]
Let $S_+ := \{ i : r_i \ge \mu \}$ and $S_- := \{1,\dots,G\}\setminus S_+$. Under the two-cluster approximation,
\[
g_i(\theta) \approx g^+ \ \text{for } i\in S_+,
\qquad
g_i(\theta) \approx g^- \ \text{for } i\in S_-.
\]
Therefore,
\begin{align*}
\nabla_\theta \mathcal{J}_{\mathrm{sur}}(\theta)
&\approx
\frac{1}{G}\left(
\sum_{i\in S_+}\hat{A}_i\, g^+
+
\sum_{i\in S_-}\hat{A}_i\, g^-
\right) \\
&=
\frac{1}{G}\left(
S_+^{A}\, g^+
+
S_-^{A}\, g^-
\right),
\end{align*}
where $S_+^{A}:=\sum_{i\in S_+}\hat A_i$ and $S_-^{A}:=\sum_{i\in S_-}\hat A_i$.

By construction of group-relative normalization,
\[
\sum_{i=1}^G \hat A_i = 0
\quad\Longrightarrow\quad
S_-^{A} = -S_+^{A}.
\]
Substituting gives
\begin{align*}
\nabla_\theta \mathcal{J}_{\mathrm{sur}}(\theta)
&\approx
\frac{1}{G}\left(
S_+^{A}\, g^+ - S_+^{A}\, g^-
\right)
=
\frac{S_+^{A}}{G}\,(g^+ - g^-).
\end{align*}
Taking norms yields
\[
\left\lVert \nabla_\theta \mathcal{J}_{\mathrm{sur}}(\theta)\right\rVert
\approx
\frac{|S_+^{A}|}{G}\,\lVert g^+ - g^- \rVert,
\]
which is the claimed relation.
\end{proof}

\subsection{Derivation of the bound in the high-variance regime}
\label{sec:proof2}

\begin{lemma}[Positive advantage mass bound]
\label{lem:pos_mass_bound}
Let $d_i := r_i - \mu$ so that $\sum_{i=1}^G d_i = 0$, and let
\[
\sigma := \sqrt{\frac{1}{G}\sum_{i=1}^G d_i^2},
\qquad
\hat A_i := \frac{d_i}{\sigma}.
\]
Define $S_+ := \{i : d_i \ge 0\}$ and $S_+^{A} := \sum_{i\in S_+} \hat A_i$. Then
\[
S_+^{A} \le \frac{G}{2}.
\]
Moreover, equality holds when $|d_i|$ is constant across $i$ and exactly half of the $d_i$ are positive and half are negative (hence $G$ is even).
\end{lemma}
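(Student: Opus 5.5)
The plan is to rewrite $S_+^{A}$ in terms of the $\ell_1$ norm of the centered rewards and then compare that $\ell_1$ norm with the $\ell_2$ norm that defines $\sigma$ via Cauchy--Schwarz. Throughout I assume $\sigma>0$, since otherwise $\hat A_i$ is undefined (the degenerate all-equal-reward group, which contributes no gradient anyway).

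\textbf{Step 1: identify the positive mass with half the $\ell_1$ norm.} Set $P := \sum_{i\in S_+} d_i$ and $N := \sum_{i\notin S_+} |d_i|$. Because $\sum_{i=1}^G d_i = 0$, we have $P - N = 0$, so $P = N$. Since $P+N = \sum_{i=1}^G |d_i|$, it follows that
\[
P = \tfrac12 \sum_{i=1}^G |d_i|.
\]
Dividing by $\sigma$ gives
\[
S_+^{A} = \frac{1}{2\sigma}\sum_{i=1}^G |d_i|.
\]

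\textbf{Step 2: bound the $\ell_1$ norm by the $\ell_2$ norm.} Apply Cauchy--Schwarz to the vectors $(|d_i|)_i$ and $(1,\dots,1)$:
\[
\sum_{i=1}^G |d_i| \le \sqrt{G}\,\Bigl(\sum_{i=1}^G d_i^2\Bigr)^{1/2} = \sqrt{G}\cdot\sqrt{G}\,\sigma = G\sigma.
\]
Substituting into Step 1 yields $S_+^{A} \le \frac{G\sigma}{2\sigma} = \frac{G}{2}$, which is the claimed bound.

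\textbf{Step 3: the equality case.} Equality in Cauchy--Schwarz holds exactly when $(|d_i|)_i$ is proportional to the all-ones vector, i.e.\ $|d_i| = a$ for all $i$, with $a>0$ because $\sigma>0$. Then every $d_i$ equals $\pm a$, and no $d_i$ is zero. The constraint $\sum_i d_i = 0$ then forces equally many $+a$ and $-a$ entries, so $G$ must be even. Conversely, in that configuration $\sigma = a$ and $S_+^{A} = (G/2)(a/a) = G/2$, which confirms that the bound is attained.

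The only mildly delicate point is Step 1: the convention $S_+ = \{d_i \ge 0\}$ puts zeros in $S_+$, but zeros contribute nothing to either $P$ or $N$, so they cause no trouble. The remaining steps are routine once the identity $P = \tfrac12\|d\|_1$ is in hand.
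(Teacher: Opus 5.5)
Your proof is correct and follows essentially the same route as the paper: you use the zero-sum constraint to write $S_+^{A} = \frac{1}{2\sigma}\sum_i |d_i|$, then bound $\sum_i |d_i| \le G\sigma$ by Cauchy--Schwarz against the all-ones vector. Your treatment of the equality case is slightly cleaner than the paper's, since you assume $\sigma>0$ explicitly (so $a>0$ and no $d_i$ vanishes) and you verify that the $\pm a$ configuration actually attains $G/2$.
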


\begin{proof}
By definition,
\[
S_+^{A} = \sum_{i\in S_+}\hat A_i = \frac{1}{\sigma}\sum_{i\in S_+} d_i.
\]
Since $\sum_{i=1}^G d_i = 0$, the total positive mass equals the total negative magnitude:
\[
\sum_{i\in S_+} d_i
=
-\sum_{i\in S_-} d_i
=
\sum_{i\in S_-} |d_i|.
\]
Consequently,
\[
\sum_{i\in S_+} d_i
=
\frac{1}{2}\sum_{i=1}^G |d_i|,
\]
and therefore
\begin{equation}
\label{eq:SplusA_l1}
S_+^{A}
=
\frac{1}{2\sigma}\sum_{i=1}^G |d_i|.
\end{equation}

Next apply Cauchy--Schwarz to the vectors $(|d_1|,\dots,|d_G|)$ and $(1,\dots,1)$:
\[
\sum_{i=1}^G |d_i|
\le
\sqrt{\sum_{i=1}^G 1^2}\,\sqrt{\sum_{i=1}^G d_i^2}
=
\sqrt{G}\,\sqrt{\sum_{i=1}^G d_i^2}.
\]
Using $\sigma = \sqrt{\frac{1}{G}\sum_{i=1}^G d_i^2}$, we have $\sqrt{\sum_{i=1}^G d_i^2} = \sqrt{G}\,\sigma$, hence
\[
\sum_{i=1}^G |d_i| \le \sqrt{G}\,(\sqrt{G}\,\sigma) = G\sigma.
\]
Plugging this into \eqref{eq:SplusA_l1} gives
\[
S_+^{A}
=
\frac{1}{2\sigma}\sum_{i=1}^G |d_i|
\le
\frac{1}{2\sigma}\,(G\sigma)
=
\frac{G}{2},
\]
as claimed.

For tightness, equality in Cauchy--Schwarz holds iff $|d_i|$ is proportional to $1$ for all $i$, that is, $|d_i| \equiv a$ for some $a\ge 0$.
Equality in $\sum_{i\in S_+} d_i = \frac{1}{2}\sum_i |d_i|$ requires that the positive and negative masses match, which with $|d_i|\equiv a$ occurs exactly when the number of $+a$ and $-a$ entries are equal. This requires $G$ even and yields $d_i\in\{+a,-a\}$ with equal counts, proving the stated condition for tightness.
\end{proof}

\subsection{Peak Accuracy under Fixed Training Steps}

\begin{table}[H]
\caption{
Peak accuracy under fixed training steps with ample rollout budgets: $n{=}128$ (1.5B) and $n{=}64$ (4B--14B). ``Rollouts Trained'' is the number of rollouts used for the policy update per step; ``Rollouts Sampled'' is the total number of rollouts generated per step.}
\label{table:ample_budget}
\centering
\resizebox{\textwidth}{!}{
\begin{tabular}{lcccccccc}
\toprule
Method & AIME 24 & AIME 25 & MATH 500 & Minerva Math & Olympiad Bench & AMC 23 & DeepMath & Rollouts Trained / Sampled \\
\midrule
\midrule
1.5B-GRPO & 18.6 & 17.5 & 80.8 & \textbf{29.8} & 44.1 & 60.9 & 69.4 & 16{,}384 / 16{,}384 \\
1.5B-DAPO & 18.6 & 17.0 & 81.8 & \textbf{29.8} & \textbf{46.3} & 62.8 & 71.0 & 16{,}384 / 49{,}152 \\
1.5B-PC & \textbf{20.2} & \textbf{19.5} & \textbf{83.2} & 29.0 & 45.3 & \textbf{65.0} & \textbf{72.6} & 16{,}384 / 24{,}576 \\
\midrule
4B-GRPO & \textbf{24.7} & 19.7 & 83.2 & 34.9 & 48.4 & \textbf{69.3} & 66.2 & 8{,}192 / 8{,}192 \\
4B-DAPO & 23.2 & 19.2 & 83.0 & 35.3 & 46.9 & 67.0 & 65.6 & 8{,}192 / 24{,}576 \\
4B-PC & 23.4 & \textbf{20.3} & \textbf{84.8} & \textbf{36.8} & \textbf{49.9} & 67.0 & \textbf{67.0} & 8{,}192 / 12{,}288 \\
\midrule
8B-GRPO & 27.4 & 16.4 & 82.8 & 36.0 & 49.0 & 70.2 & 64.2 & 8{,}192 / 8{,}192 \\
8B-DAPO & \textbf{29.3} & 20.0 & \textbf{87.0} & 37.1 & 51.2 & 70.4 & 70.2 & 8{,}192 / 24{,}576 \\
8B-PC & 27.3 & \textbf{20.9} & 86.4 & \textbf{39.0} & \textbf{52.1} & \textbf{75.8} & \textbf{71.6} & 8{,}192 / 12{,}288 \\
\midrule
14B-GRPO & 40.1 & 28.0 & 89.6 & \textbf{39.7} & 58.5 & 84.1 & 77.4 & 8{,}192 / 8{,}192 \\
14B-DAPO & \textbf{40.2} & \textbf{28.6} & \textbf{91.0} & 39.3 & \textbf{59.3} & 84.5 & \textbf{78.0} & 8{,}192 / 24{,}576 \\
14B-PC & 39.9 & 25.5 & 90.0 & 39.0 & 57.3 & \textbf{85.0} & 76.6 & 8{,}192 / 12{,}288 \\
\bottomrule
\end{tabular}}
\end{table}

\begin{table}[H]
\caption{
Peak accuracy under fixed training steps with limited rollout budgets ($n{=}16$). ``Rollouts Trained'' is the number of rollouts used for the policy update per step; ``Rollouts Sampled'' is the total number of rollouts generated per step.}
\label{table:limited_budget}
\centering
\resizebox{\textwidth}{!}{
\begin{tabular}{lcccccccc}
\toprule
Method & AIME 24 & AIME 25 & MATH 500 & Minerva Math & Olympiad Bench & AMC 23 & DeepMath & Rollouts Trained / Sampled \\
\midrule
\midrule
1.5B-GRPO & 16.6 & 14.3 & \textbf{78.8} & 28.7 & 41.8 & 57.8 & 64.0 & 2{,}048 / 2{,}048 \\
1.5B-DAPO & 17.0 & 14.5 & \textbf{78.8} & \textbf{29.4} & \textbf{43.0} & 59.5 & 65.0 & 2{,}048 / 6{,}144 \\
1.5B-PC & \textbf{17.1} & \textbf{15.3} & 77.8 & 28.7 & \textbf{43.0} & \textbf{61.7} & \textbf{65.2} & 2{,}048 / 4{,}096 \\
\midrule
4B-GRPO & 20.4 & 17.4 & 81.2 & 35.3 & 47.0 & 65.5 & 62.2 & 2{,}048 / 2{,}048 \\
4B-DAPO & 23.5 & 19.1 & 82.4 & 34.6 & 47.6 & \textbf{69.5} & \textbf{67.0} & 2{,}048 / 6{,}144 \\
4B-PC & \textbf{24.3} & \textbf{19.6} & \textbf{85.4} & \textbf{36.8} & \textbf{49.7} & 69.4 & 65.8 & 2{,}048 / 4{,}096 \\
\midrule
8B-GRPO & \textbf{27.6} & 16.6 & 83.0 & 38.2 & 45.7 & 67.7 & 66.4 & 2{,}048 / 2{,}048 \\
8B-DAPO & 26.5 & 20.8 & \textbf{87.2} & 37.5 & \textbf{52.5} & 72.8 & \textbf{69.6} & 2{,}048 / 6{,}144 \\
8B-PC & 26.2 & \textbf{21.4} & 87.0 & \textbf{39.0} & 50.7 & \textbf{73.5} & 68.0 & 2{,}048 / 4{,}096 \\
\midrule
14B-GRPO & 31.1 & 23.6 & 89.0 & 38.6 & 56.7 & 78.7 & 74.4 & 2{,}048 / 2{,}048 \\
14B-DAPO & \textbf{37.3} & 21.9 & 88.6 & 38.2 & 52.7 & 75.2 & 73.4 & 2{,}048 / 6{,}144 \\
14B-PC & 37.1 & \textbf{27.5} & \textbf{90.2} & \textbf{39.3} & \textbf{57.0} & \textbf{84.2} & \textbf{75.6} & 2{,}048 / 4{,}096 \\
\bottomrule
\end{tabular}}
\end{table}

\subsection{Per-Benchmark Training Curves}

The following figures show per-benchmark accuracy for all model scales and rollout budgets. Each figure contains 8 subplots (7 individual benchmarks + average). Figures are presented in both step-normalized (accuracy vs training steps) and rollout-normalized (accuracy vs cumulative rollouts) formats.

\subsubsection{Qwen2.5-Math-1.5B (DeepMath-103K)}

\begin{figure}[H]
    \centering
    \includegraphics[width=\textwidth]{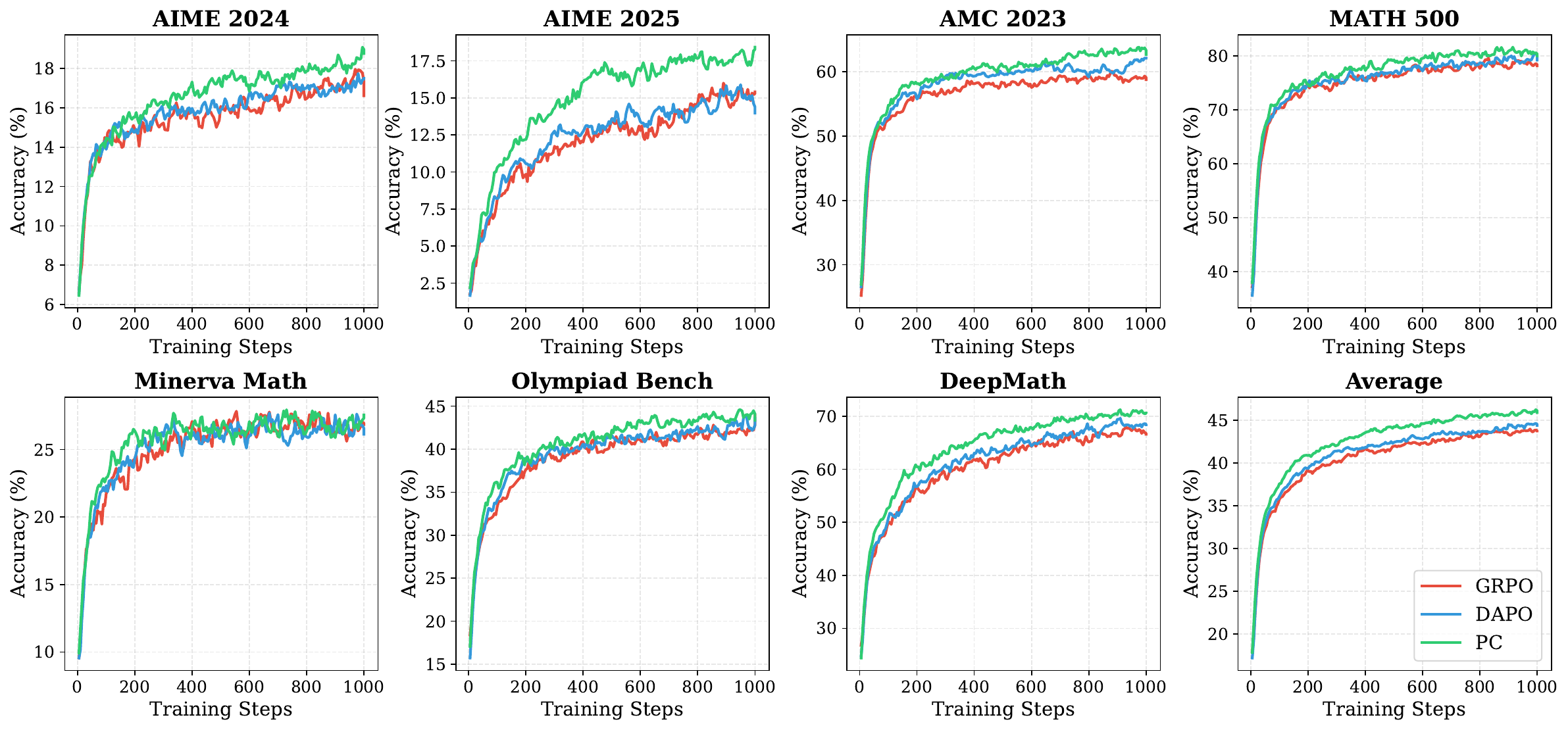}
    \caption{Qwen2.5-Math-1.5B, $n{=}128$: accuracy vs training steps.}
    \label{fig:app-1.5b-128-steps}
\end{figure}

\begin{figure}[H]
    \centering
    \includegraphics[width=\textwidth]{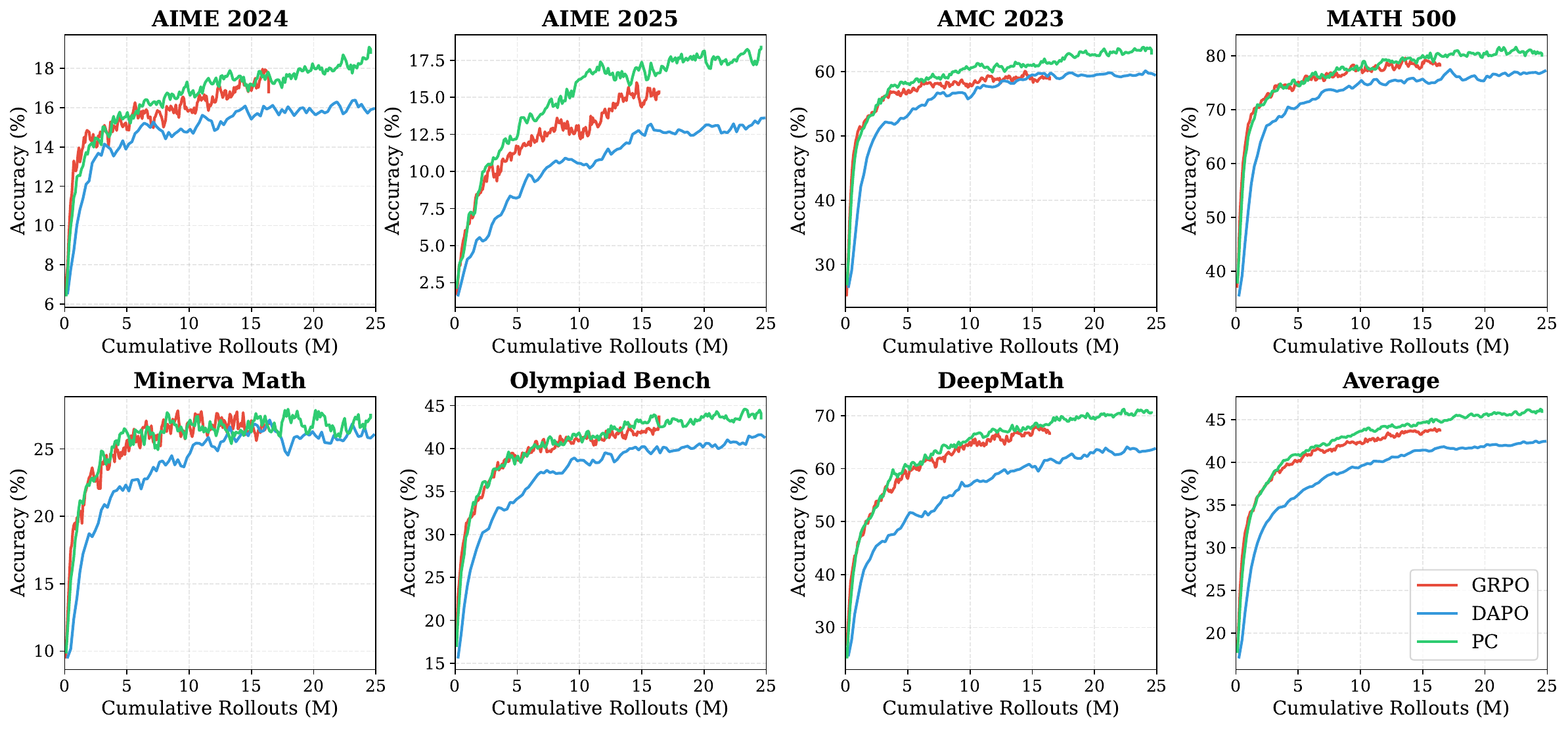}
    \caption{Qwen2.5-Math-1.5B, $n{=}128$: accuracy vs cumulative rollouts.}
    \label{fig:app-1.5b-128-rollouts}
\end{figure}

\begin{figure}[H]
    \centering
    \includegraphics[width=\textwidth]{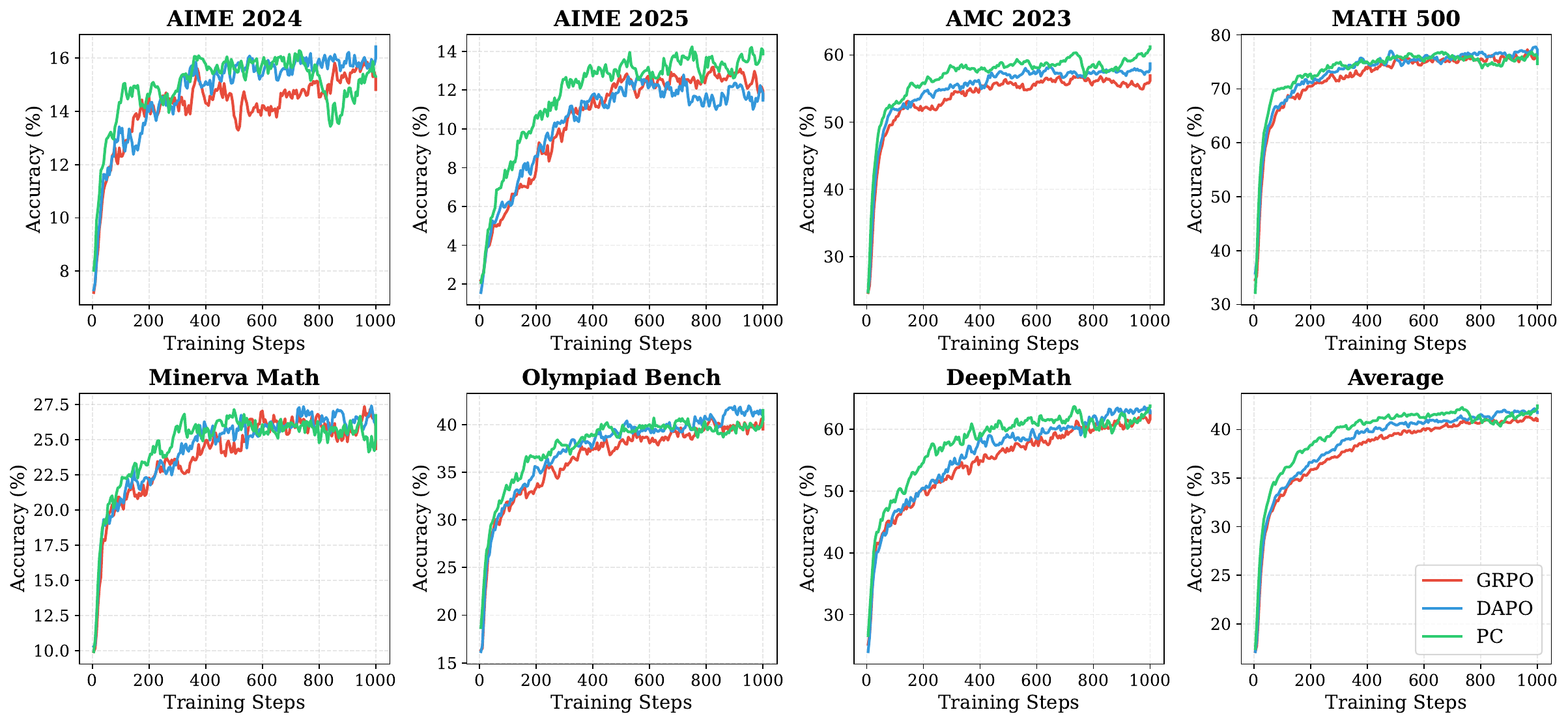}
    \caption{Qwen2.5-Math-1.5B, $n{=}16$: accuracy vs training steps.}
    \label{fig:app-1.5b-16-steps}
\end{figure}

\begin{figure}[H]
    \centering
    \includegraphics[width=\textwidth]{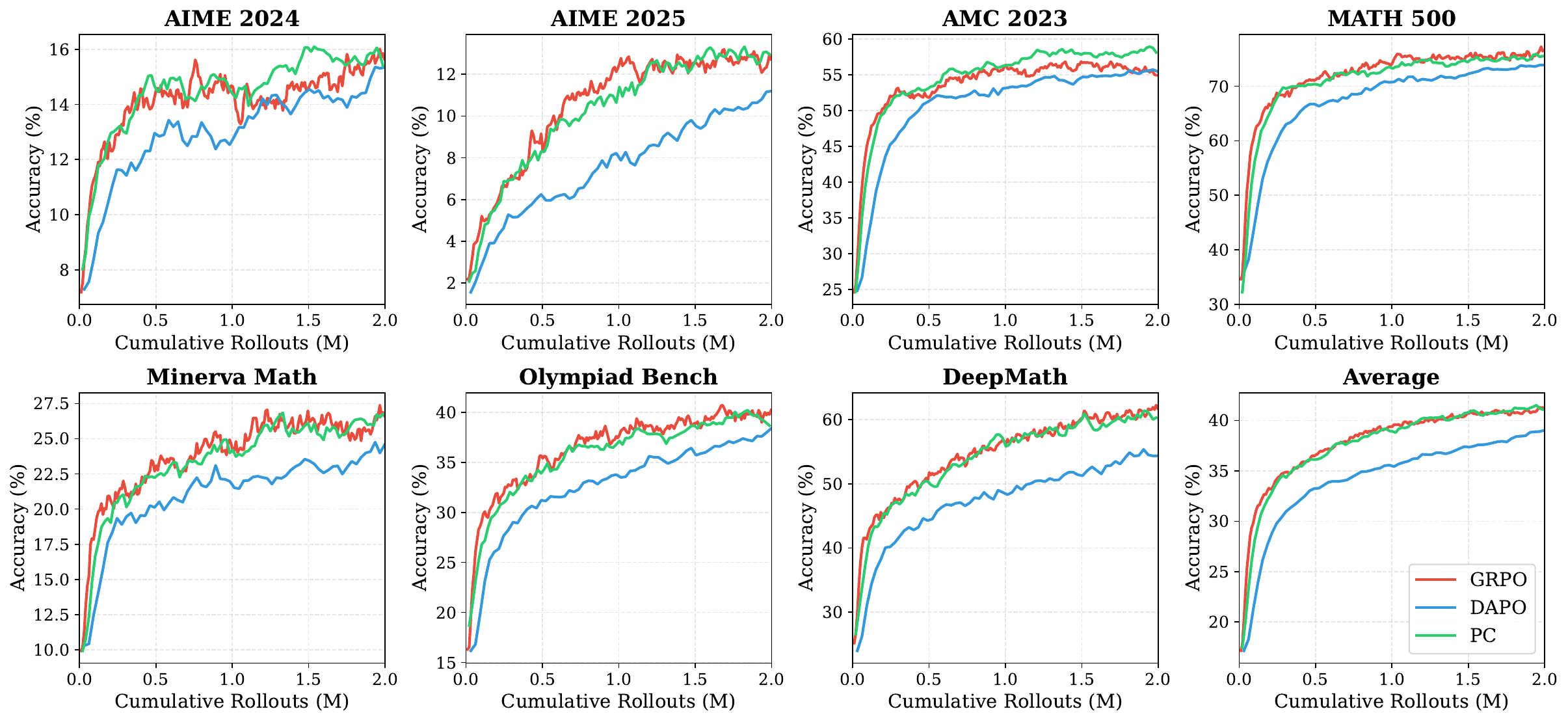}
    \caption{Qwen2.5-Math-1.5B, $n{=}16$: accuracy vs cumulative rollouts.}
    \label{fig:app-1.5b-16-rollouts}
\end{figure}

\subsubsection{Qwen3-4B (Polaris-53K)}

\begin{figure}[H]
    \centering
    \includegraphics[width=\textwidth]{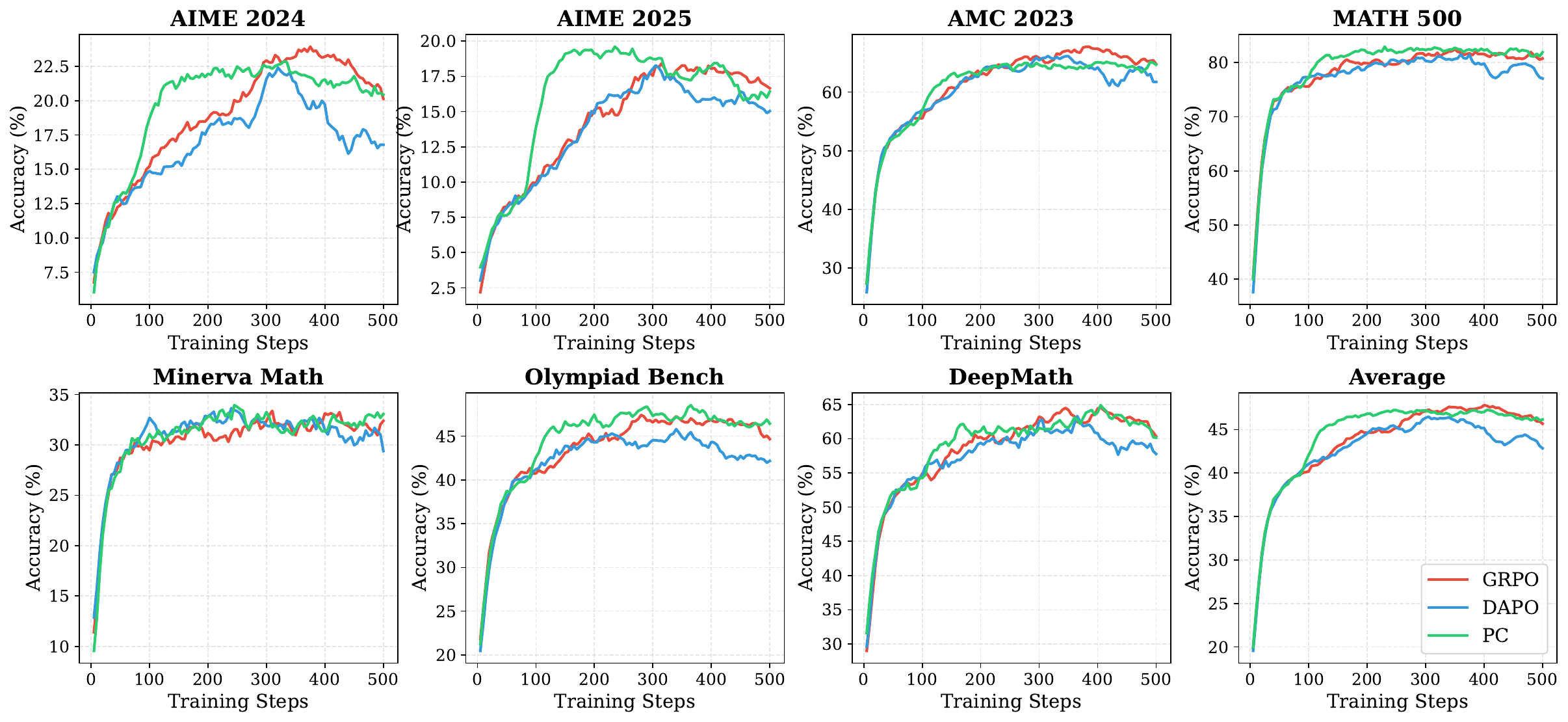}
    \caption{Qwen3-4B, $n{=}64$: accuracy vs training steps.}
    \label{fig:app-4b-64-steps}
\end{figure}

\begin{figure}[H]
    \centering
    \includegraphics[width=\textwidth]{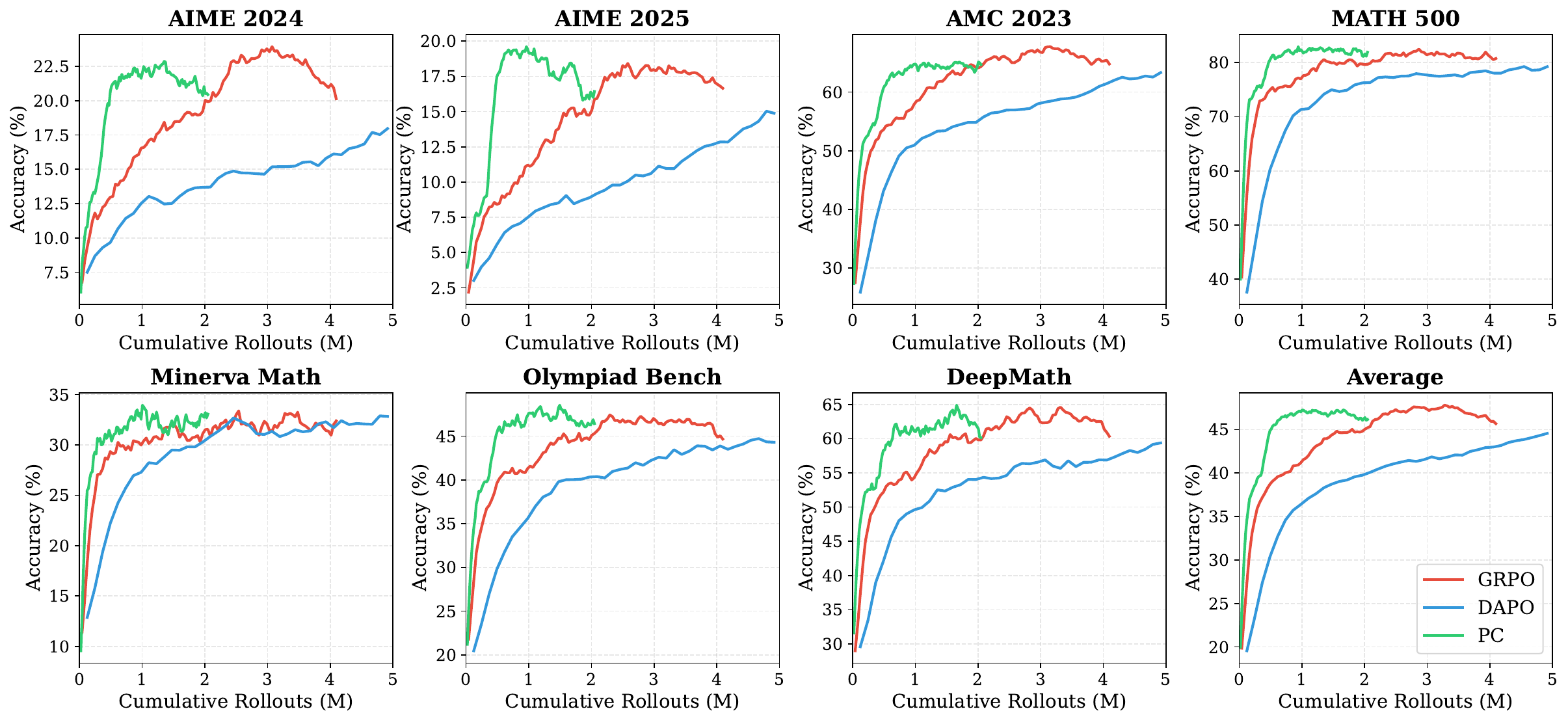}
    \caption{Qwen3-4B, $n{=}64$: accuracy vs cumulative rollouts.}
    \label{fig:app-4b-64-rollouts}
\end{figure}

\begin{figure}[H]
    \centering
    \includegraphics[width=\textwidth]{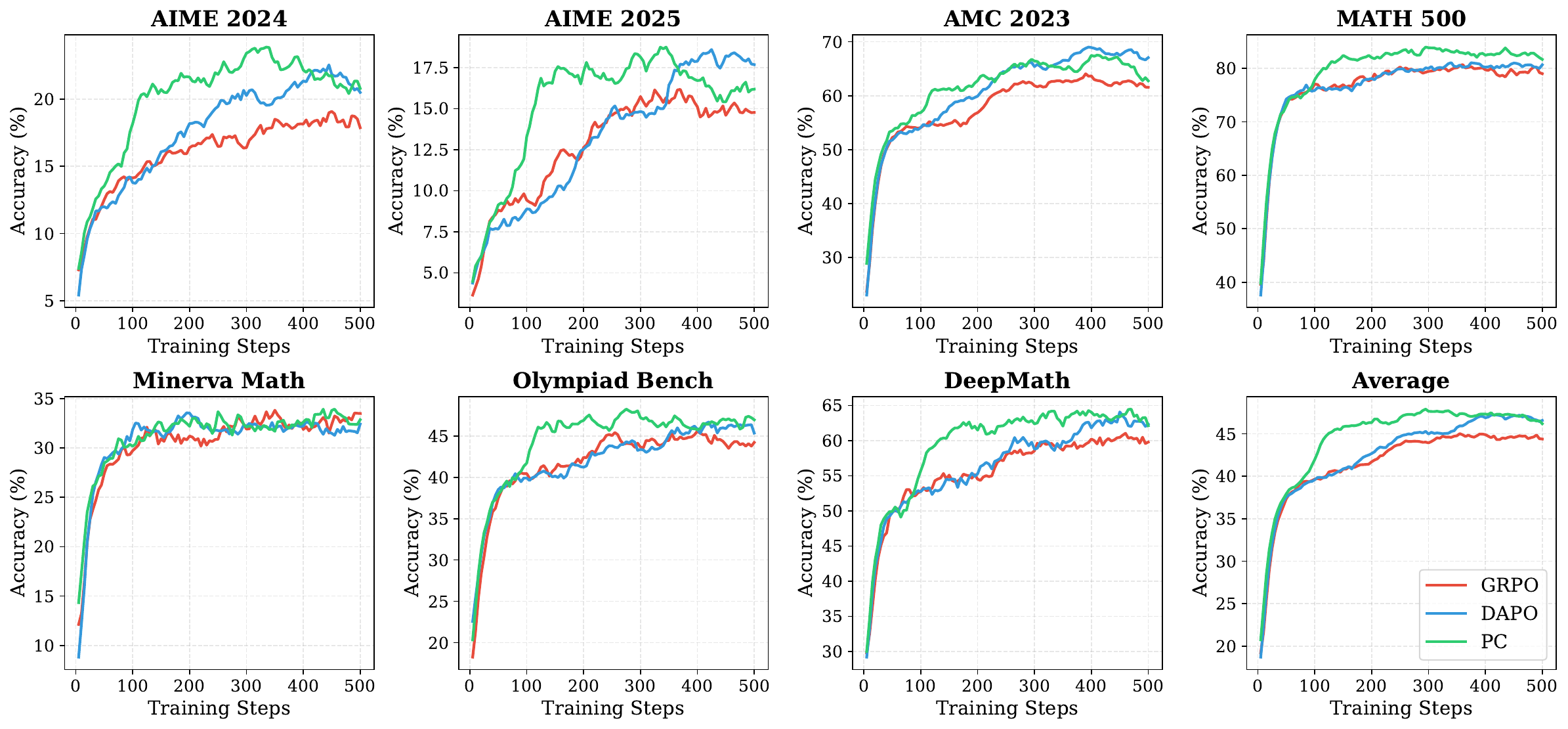}
    \caption{Qwen3-4B, $n{=}16$: accuracy vs training steps.}
    \label{fig:app-4b-16-steps}
\end{figure}

\begin{figure}[H]
    \centering
    \includegraphics[width=\textwidth]{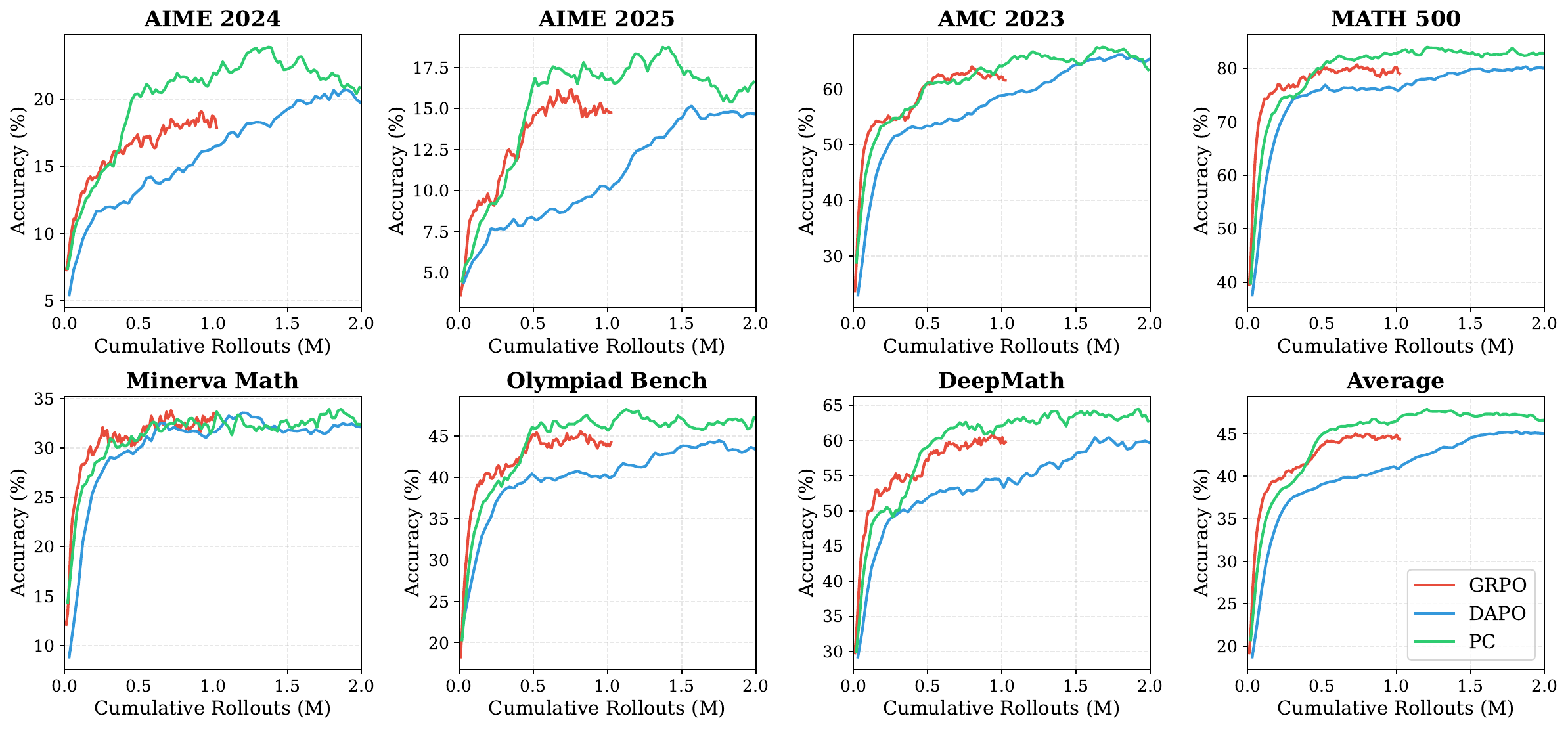}
    \caption{Qwen3-4B, $n{=}16$: accuracy vs cumulative rollouts.}
    \label{fig:app-4b-16-rollouts}
\end{figure}

\subsubsection{Qwen3-8B (Polaris-53K)}

\begin{figure}[H]
    \centering
    \includegraphics[width=\textwidth]{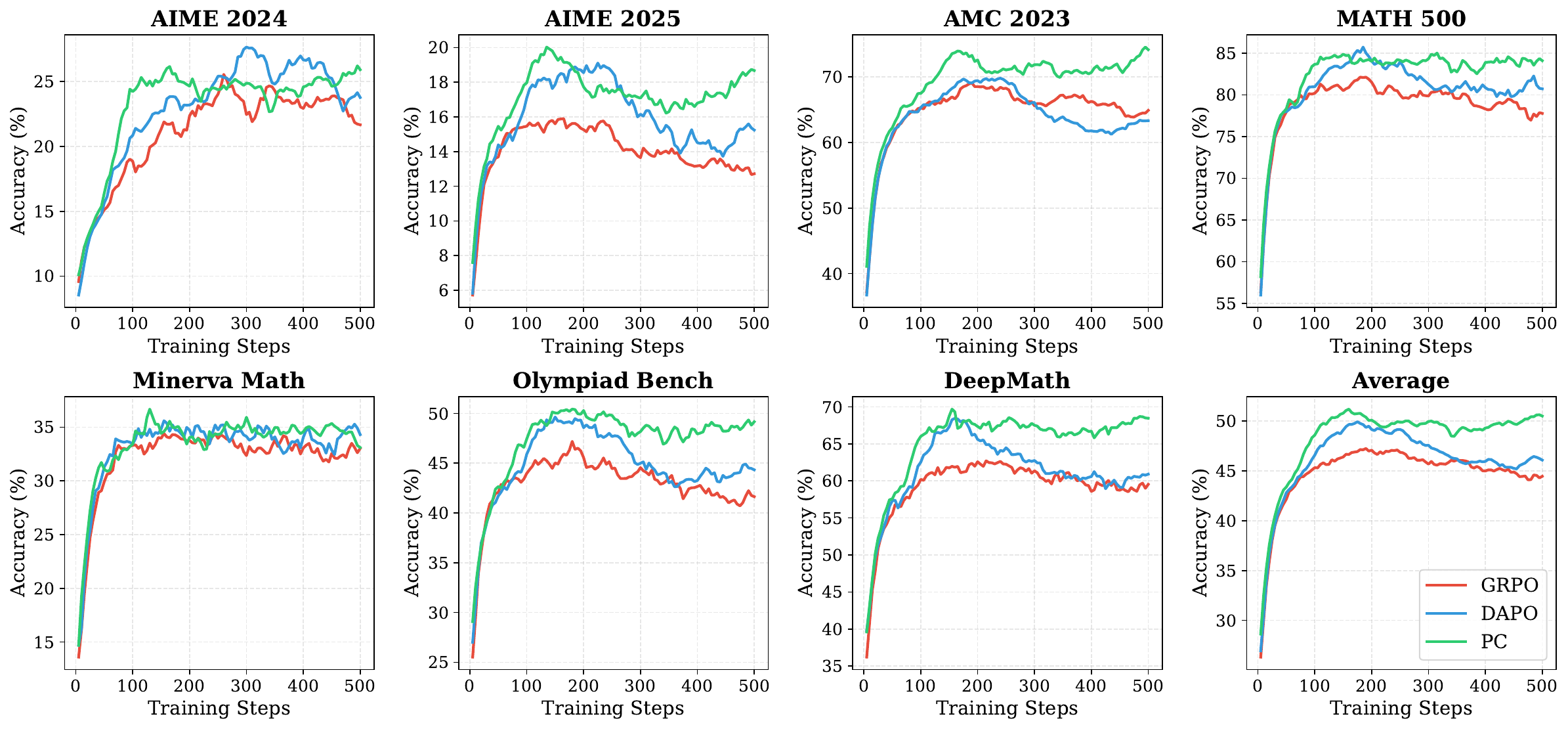}
    \caption{Qwen3-8B, $n{=}64$: accuracy vs training steps.}
    \label{fig:app-8b-64-steps}
\end{figure}

\begin{figure}[H]
    \centering
    \includegraphics[width=\textwidth]{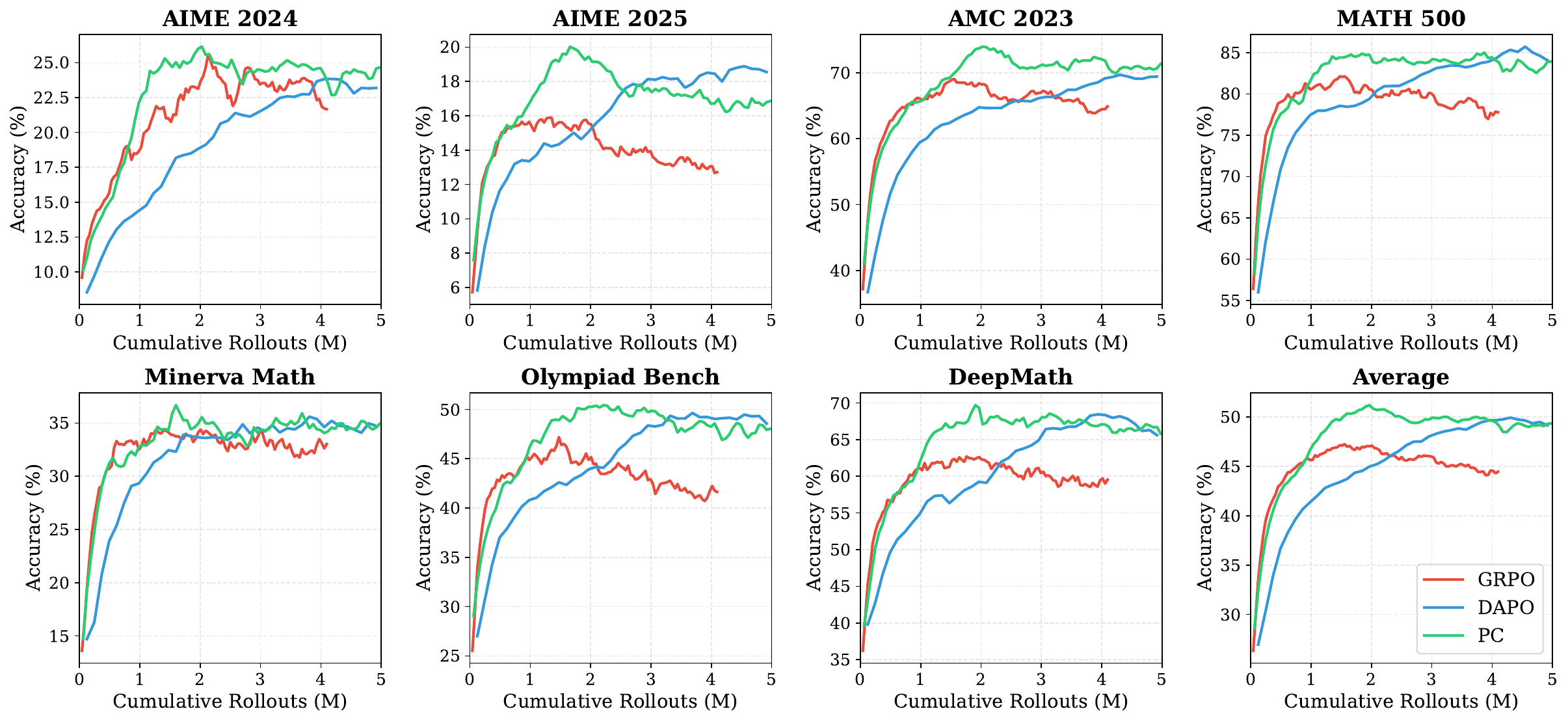}
    \caption{Qwen3-8B, $n{=}64$: accuracy vs cumulative rollouts.}
    \label{fig:app-8b-64-rollouts}
\end{figure}

\begin{figure}[H]
    \centering
    \includegraphics[width=\textwidth]{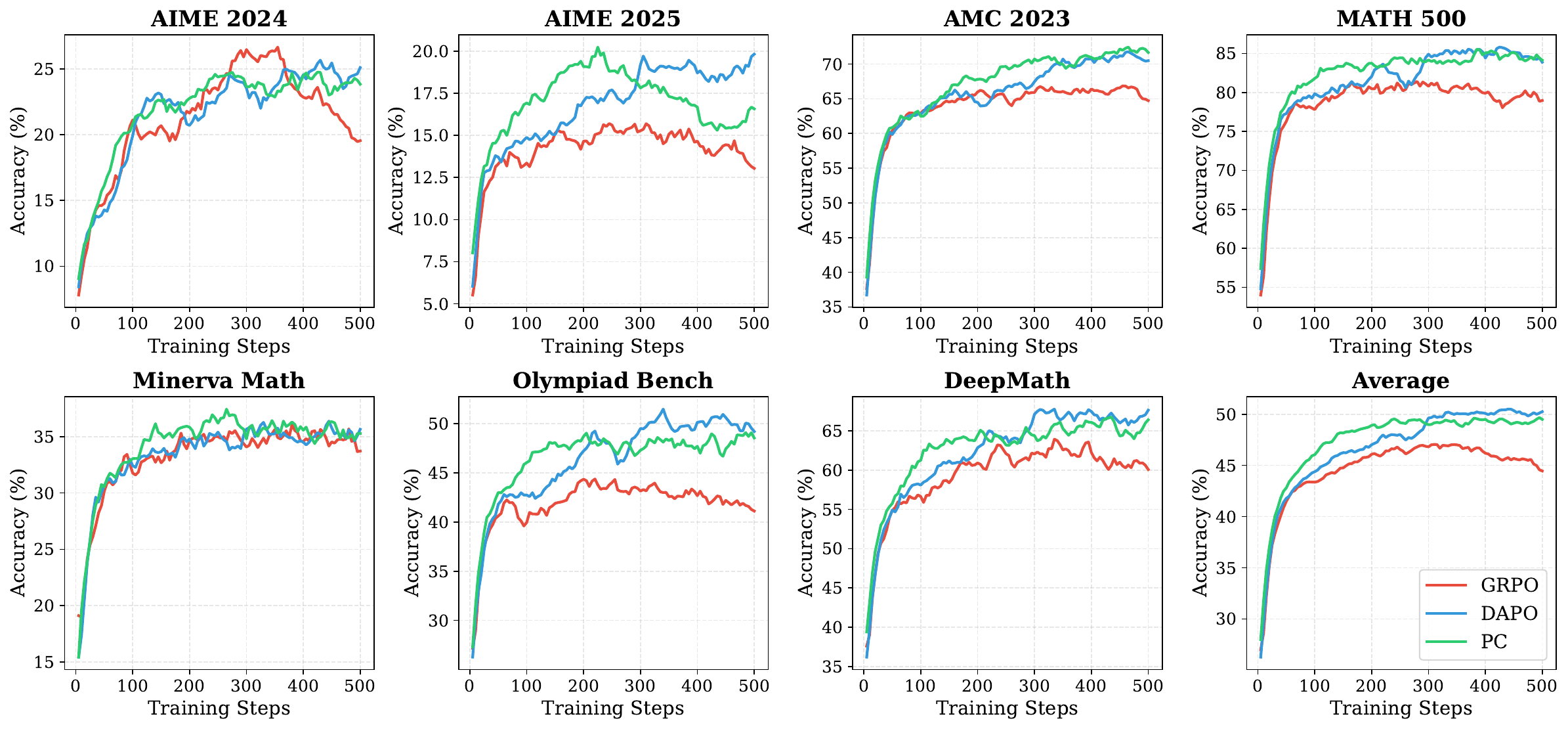}
    \caption{Qwen3-8B, $n{=}16$: accuracy vs training steps.}
    \label{fig:app-8b-16-steps}
\end{figure}

\begin{figure}[H]
    \centering
    \includegraphics[width=\textwidth]{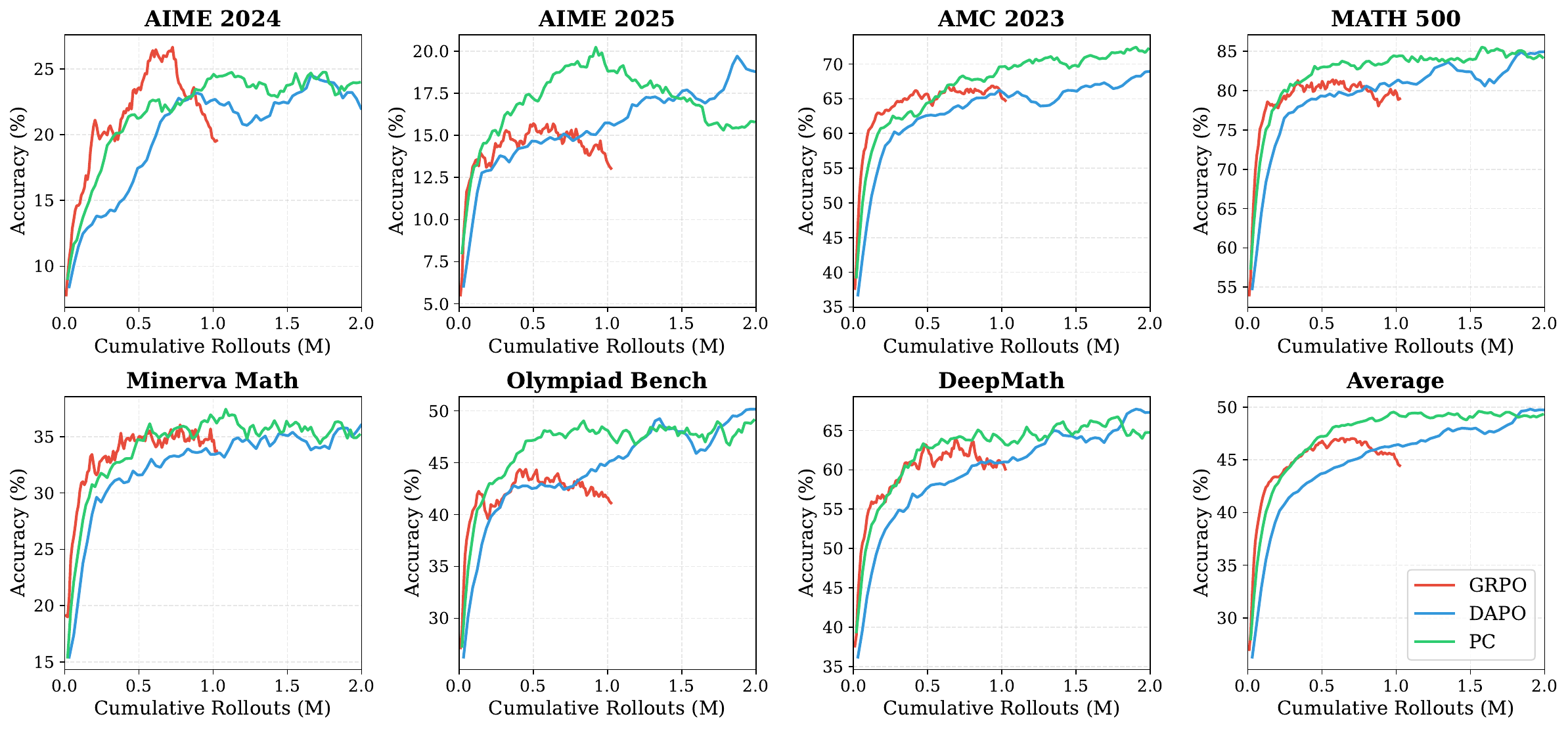}
    \caption{Qwen3-8B, $n{=}16$: accuracy vs cumulative rollouts.}
    \label{fig:app-8b-16-rollouts}
\end{figure}

\subsubsection{Qwen3-14B (Polaris-53K)}

\begin{figure}[H]
    \centering
    \includegraphics[width=\textwidth]{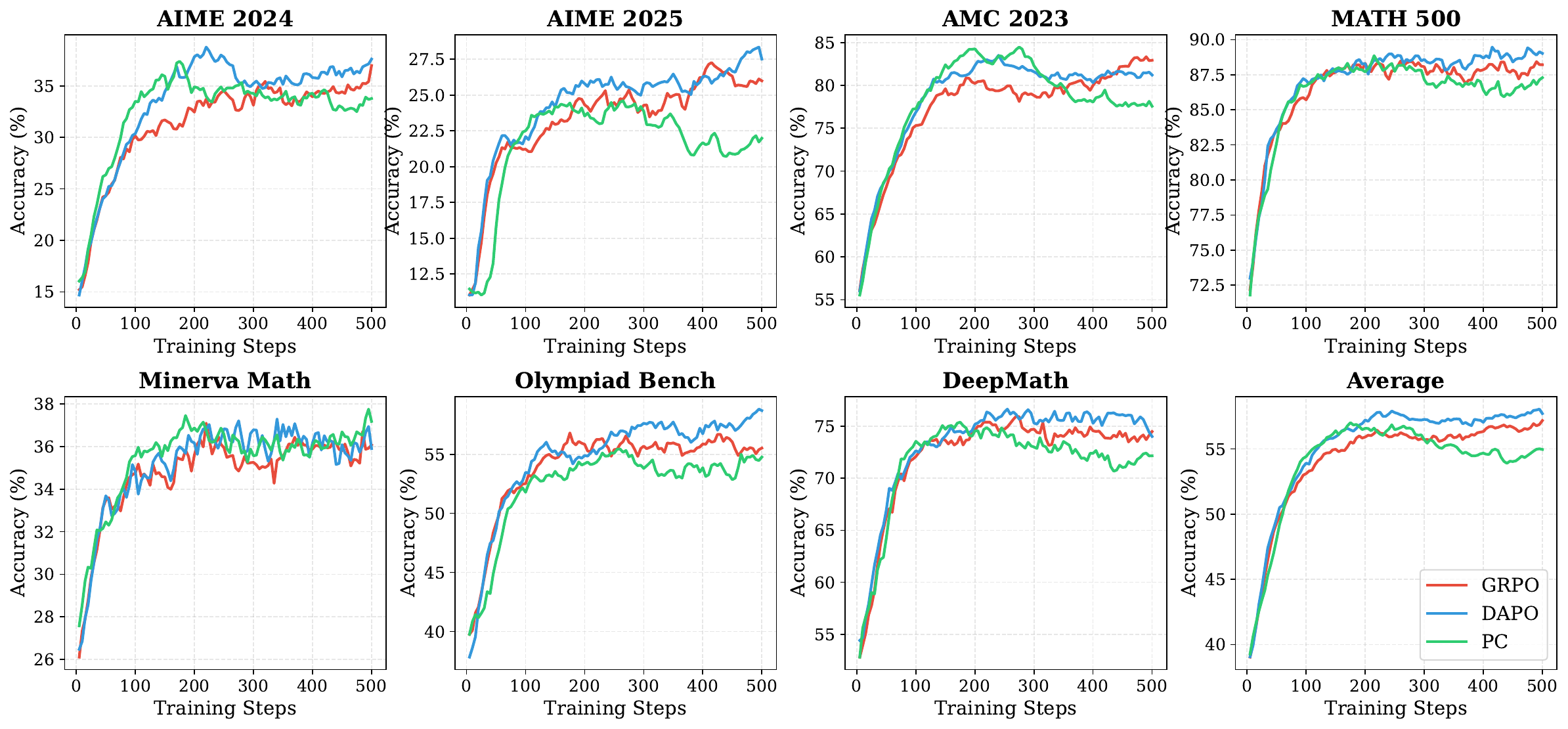}
    \caption{Qwen3-14B, $n{=}64$: accuracy vs training steps.}
    \label{fig:app-14b-64-steps}
\end{figure}

\begin{figure}[H]
    \centering
    \includegraphics[width=\textwidth]{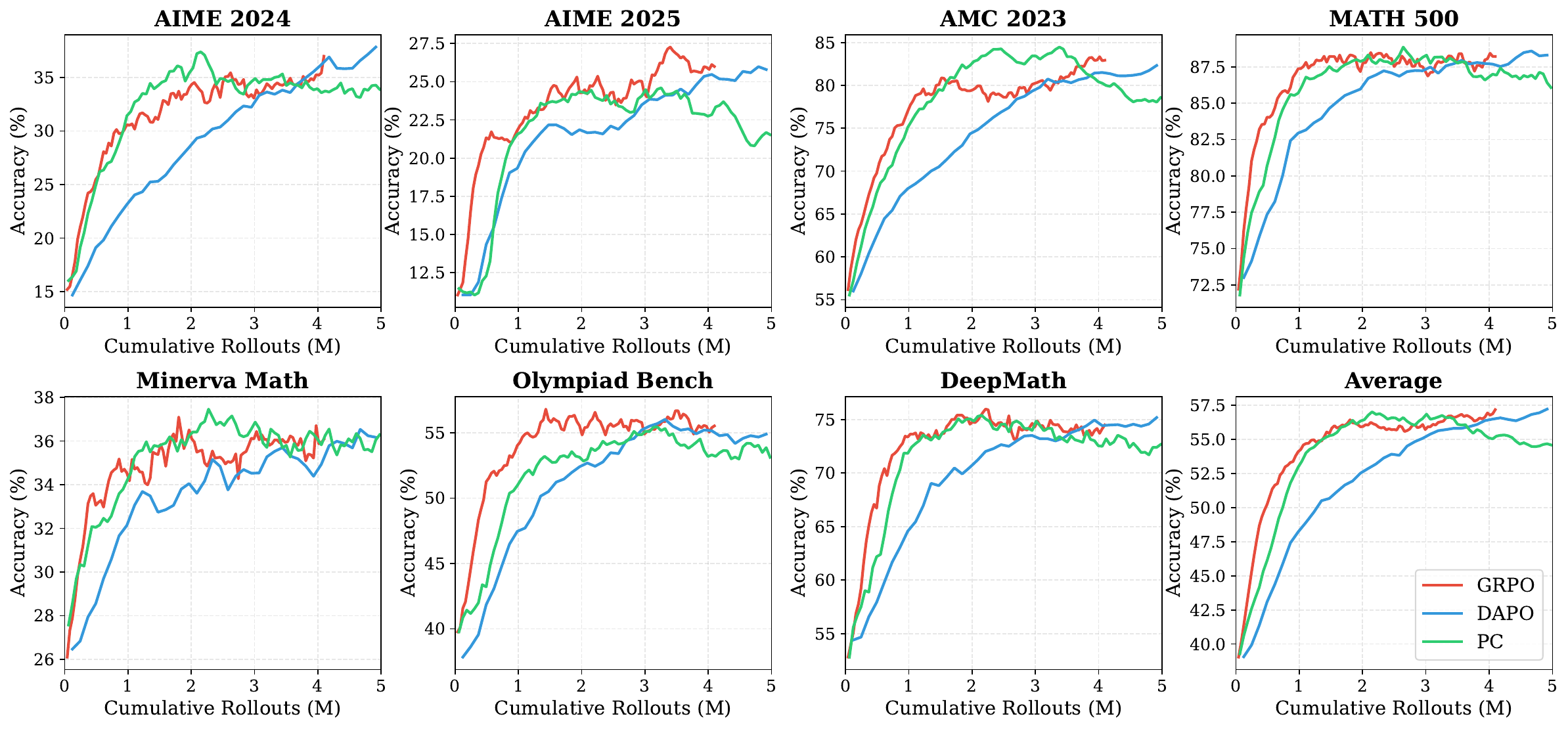}
    \caption{Qwen3-14B, $n{=}64$: accuracy vs cumulative rollouts.}
    \label{fig:app-14b-64-rollouts}
\end{figure}

\begin{figure}[H]
    \centering
    \includegraphics[width=\textwidth]{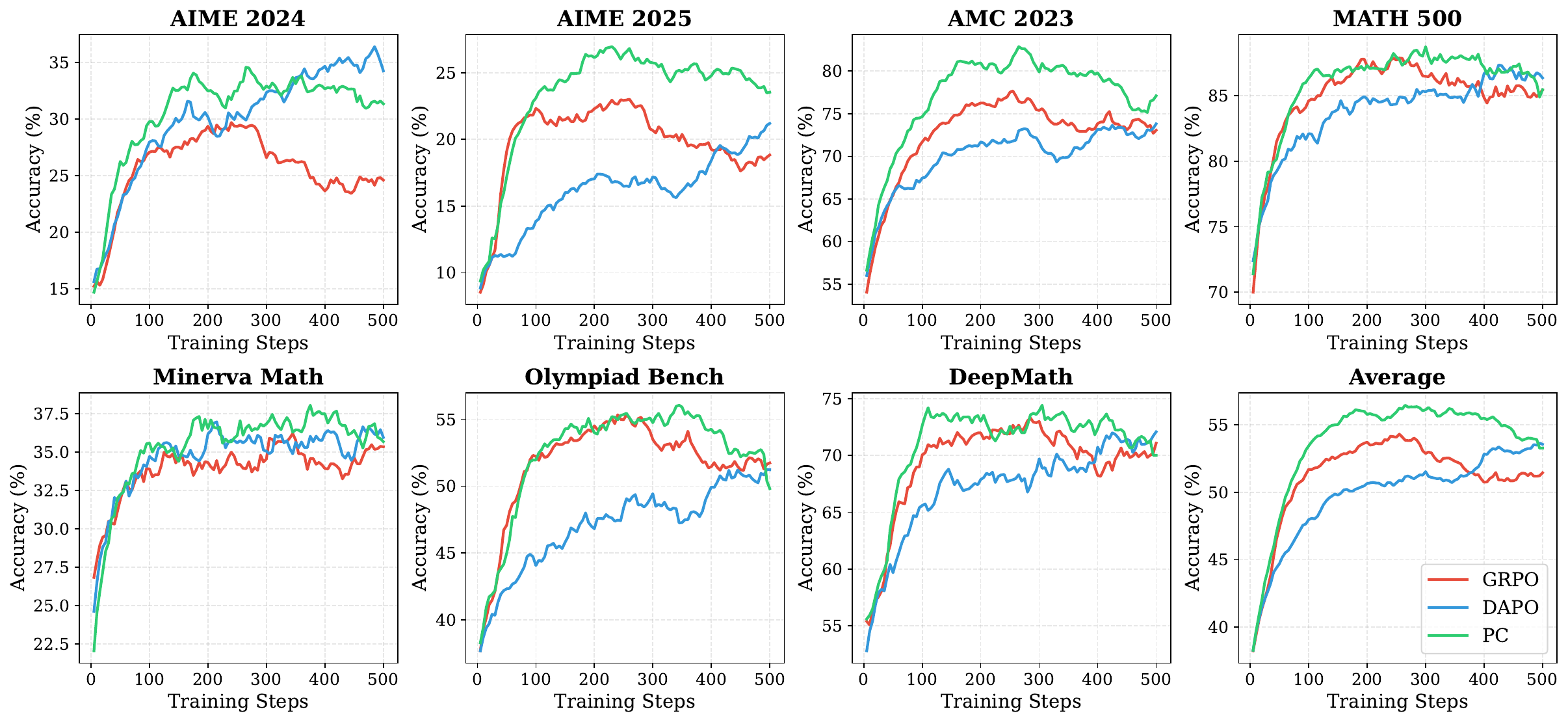}
    \caption{Qwen3-14B, $n{=}16$: accuracy vs training steps.}
    \label{fig:app-14b-16-steps}
\end{figure}

\begin{figure}[H]
    \centering
    \includegraphics[width=\textwidth]{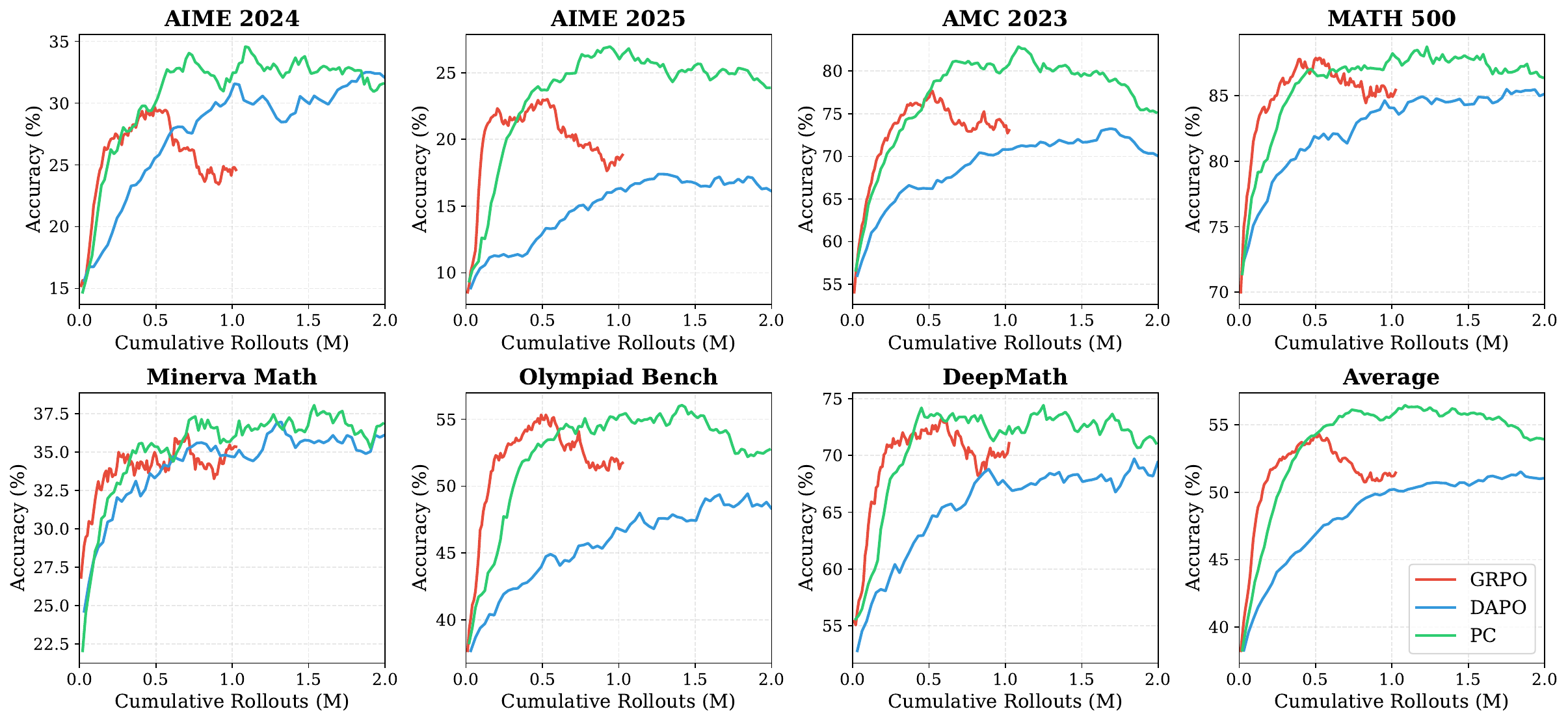}
    \caption{Qwen3-14B, $n{=}16$: accuracy vs cumulative rollouts.}
    \label{fig:app-14b-16-rollouts}
\end{figure}

% \subsection{Eviction Statistics for $p_{\mathrm{solve}}$ Ablation}
%
% \begin{figure}[H]
%     \centering
%     \includegraphics[width=0.8\textwidth]{figures/p_solve_ablation_stats.pdf}
%     \caption{Eviction statistics for different $p_{\mathrm{solve}}$ values. Left: fraction of prompts flagged as too correct per step. Right: cumulative fraction of prompts evicted from the dataset. More aggressive thresholds evict substantially more prompts without improving accuracy.}
%     \label{fig:ablation_p_solve_exclusion}
% \end{figure}

\subsection{Pilot/Commit Allocation under Equal Sampling Cost}

\begin{figure}[H]
    \centering
    \includegraphics[width=\textwidth]{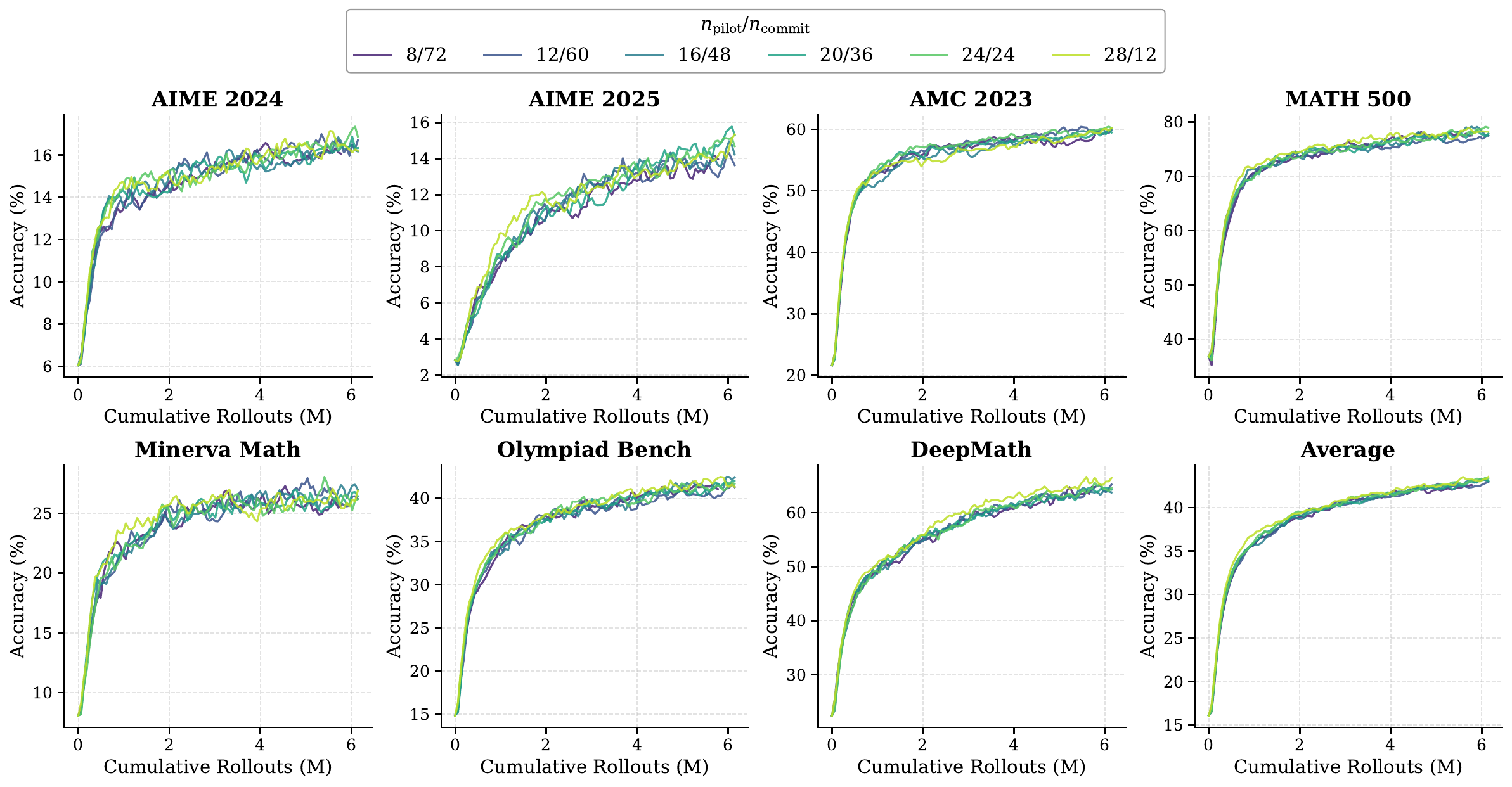}
    \caption{Comparison of Pilot-Commit configurations under equal sampling cost. The number of training rollouts varies across configurations.
    }
    \label{fig:fixed_sample_budget}
\end{figure}

\begin{figure}[H]
    \centering
    \includegraphics[width=\textwidth]{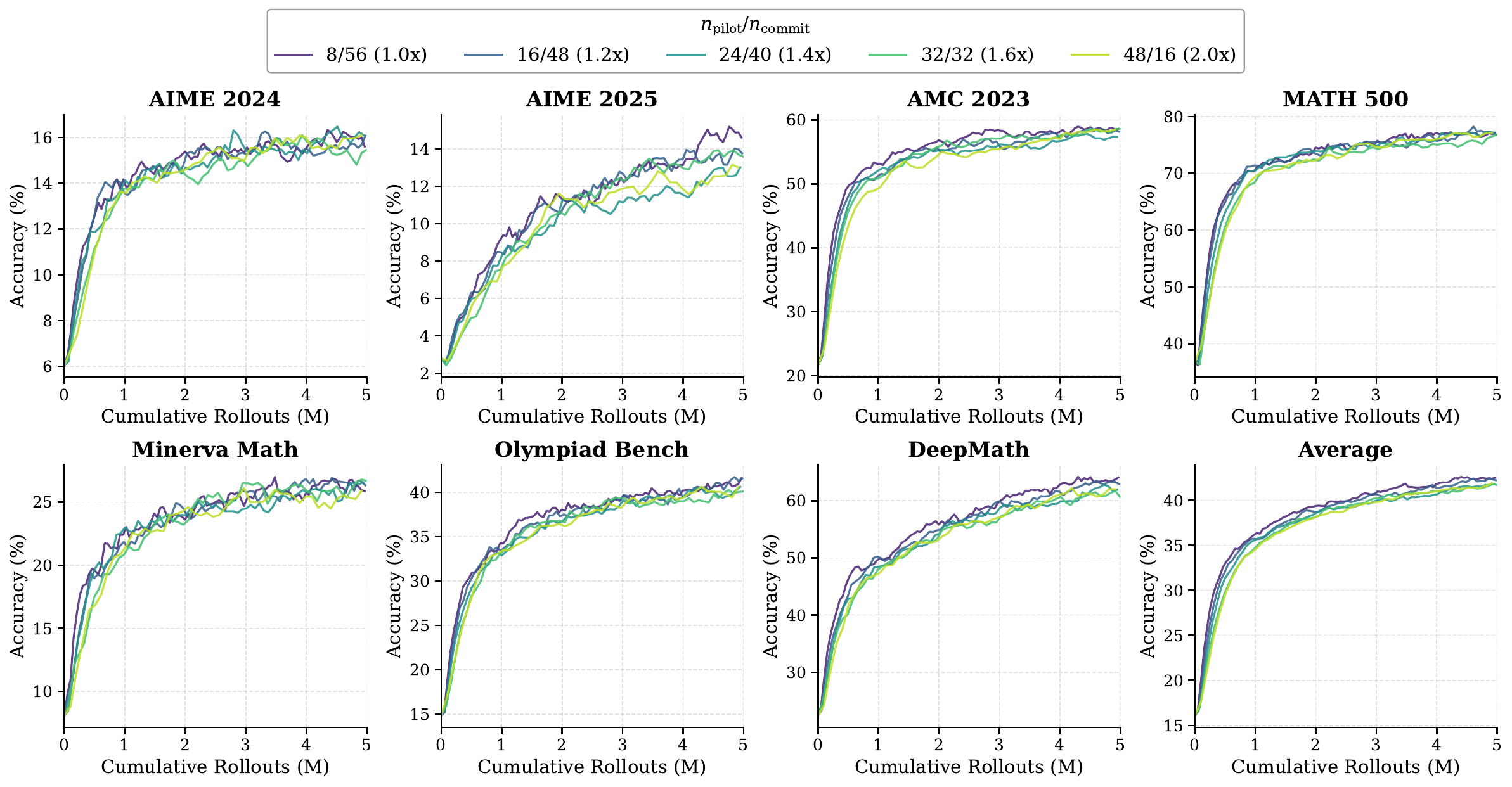}
    \caption{Comparison of Pilot-Commit configurations under equal training cost. While $n_{\mathrm{pilot}} + n_{\mathrm{commit}}$ is held constant, different allocations result in varying sampling costs per update.
    }
    \label{fig:fixed_train_budget}
\end{figure}

\end{document}